

\def\llncs{0}
\def\mnotes{0}

\ifnum\llncs=1
    \documentclass[envcountsect,runningheads]{llncs}
    \usepackage{pifont}
    \usepackage{color,amsfonts,amsmath,url,wrapfig}

\else

    \documentclass[11pt]{article}
    \usepackage{color,amsfonts,amsmath,url,wrapfig}

    \usepackage{fullpage,times,amsthm,latexsym,amssymb,graphicx}



\usepackage[colorlinks=true,breaklinks=true,linkcolor=black,
citecolor=black,filecolor=black,menucolor=black,
pagecolor=black,urlcolor=black]{hyperref}


\topmargin 0pt

\advance \topmargin by -\headheight

\advance \topmargin by -\headsep


 \textheight 9in


\fi 


\ifnum\llncs=0

\newtheorem{theorem}{Theorem}[section]
\newtheorem{lemma}[theorem]{Lemma}
\newtheorem{mylemma}[theorem]{Lemma}

\newtheorem{proposition}[theorem]{Proposition}
\newtheorem{myproposition}[theorem]{Proposition}
\newtheorem{claim}[theorem]{Claim}
\newtheorem{myclaim}[theorem]{Claim}
\newtheorem{fact}[theorem]{Fact}
\newtheorem{corollary}[theorem]{Corollary}
\newtheorem{mycorollary}[theorem]{Corollary}

\newtheorem{definition}{Definition}[section]


\theoremstyle{definition}


\theoremstyle{remark}

\newtheorem{myremark}{Remark} [section]
\newenvironment{remark}{\begin{myremark}}{$\diamondsuit$\end{myremark}}

\newtheorem{myexample}{Example}

\else

\fi

\usepackage{amsfonts,amsmath, amsopn, amssymb, epsfig, graphicx, multirow, url, verbatim}
\usepackage[ruled,vlined,noline,noend]{algorithm2e}
\usepackage{color,bm}
\ifnum\mnotes=0
\newcommand{\mnote}[1]{}
\else
\newcounter{mynotes}
\setcounter{mynotes}{0}
\newcommand{\mnote}[1]{\addtocounter{mynotes}{1}{{\bf !}}%
\marginpar{\scriptsize  {\arabic{mynotes}.\ {\sf \textcolor{red}{#1}}}}}
\fi

\newcommand{\snote}[1]{\mnote{S: #1}}

\newcommand{\gnote}[1]{\mnote{G: #1}}


\newenvironment{myproof}{\begin{proof}
}{\ifnum\llncs=1
{~}\qed
\fi
\end{proof}}
\newenvironment{proofof}[1]{\begin{myproof}[\ifnum\llncs=0 Proof \fi of {#1}]
}{\end{myproof}}


\newcommand{\ignore}[1]{}

\newcommand{\eps}{\epsilon}

\DeclareMathOperator{\loglog}{\log\log}

\newcommand{\etal}{{\it et~al.}}

\SetAlgoProcName{}

\newcommand{\up}[1]{#1^{\uparrow}}
\newcommand{\down}[1]{#1^{\downarrow}}
\newcommand{\downdec}[1]{#1^{\le \downarrow}}
\newcommand{\monlb}[1]{f^{mon}_{\down{#1}}}
\newcommand{\ind}[1]{\mathbf 1_{#1}}
\newcommand{\live}[1]{\text{live}(#1)}

\newcommand{\stars}{stars}
\newcommand{\dtd}{\mbox{DT-depth}}
\newcommand{\fdeg}{deg}
\newcommand{\myarg}{S}
\newcommand{\myvar}{Y}
\newcommand{\myrange}{r}
\newcommand{\mywidth}{k}
\newcommand{\mytrange}{R_r}
\newcommand{\mydnf}{\text{DNF}^{\mywidth,\myrange}}
\newcommand{\myidnf}{\text{DNF}^{\mywidth,\myrange}}
\newcommand{\myrand}[1]{#1|_{\rho}}
\newcommand{\myindexk}{t}
\newcommand{\mysize}{n}

\newcommand{\domain}{\ensuremath{2^{[\mysize]}}}
\newcommand{\range}{\ensuremath{\{0,\dots,\mywidth\}}}
\newcommand{\f}{\ensuremath{f:\domain\to\range}}

\begin{document}

\ifnum\llncs=1
    \bibliographystyle{splncs03}
\else
    \bibliographystyle{alpha}
\fi

\title{Learning Pseudo-Boolean $k$-DNF and Submodular Functions\thanks{This material is based upon work supported by NSF CAREER award CCF-0845701.}}

\ifnum\llncs=1

\author{
Sofya Raskhodnikova\inst{1},
and Grigory Yaroslavtsev\inst{1}
}

\institute{
Pennsylvania State University, USA. {\tt \{sofya, grigory\}@cse.psu.edu}.
}

\authorrunning{S. Raskhodnikova and G. Yaroslavtsev}
\titlerunning{Learning Pseudo-Boolean $k$-DNF and Submodular Functions}
\else
   \author{
        Sofya Raskhodnikova\thanks{Pennsylvania State University, USA. {\tt \{sofya, grigory\}@cse.psu.edu}.
        }
        \and Grigory Yaroslavtsev\protect\footnotemark[2]
   }
\fi

\maketitle

\begin{abstract}
We prove that any submodular function $f:\{0,1\}^n \rightarrow \{0,1,...,k\}$ can be represented as a pseudo-Boolean $2k$-DNF formula. Pseudo-Boolean DNFs are a natural generalization of DNF representation for functions with integer range. Each term in such a formula has an associated integral constant. We show that an analog of H{\aa}stad's switching lemma holds for pseudo-Boolean $k$-DNFs if all constants associated with the terms of the formula are bounded.

This allows us to generalize Mansour's PAC-learning\snote{Should talk about agnostic learning instead?} algorithm for $k$-DNFs to pseudo-Boolean $k$-DNFs, and hence gives a PAC-learning algorithm with membership queries under the uniform distribution for submodular functions of the form $f:\{0,1\}^n \rightarrow \{0,1,...,k\}$. Our algorithm
 runs in time polynomial in $n$, $\mywidth^{O(\mywidth \log \mywidth / \epsilon)}$, $1/\epsilon$ and $\log (1/\delta)$  and works even in the agnostic setting.
The line of previous work on learning submodular functions [Balcan, Harvey (STOC '11), Gupta, Hardt, Roth, Ullman (STOC '11), Cheraghchi, Klivans, Kothari, Lee (SODA '12)] implies only $n^{O(k)}$ query complexity for learning submodular functions in this setting, for fixed $\eps$ and $\delta$.

Our learning algorithm implies a property tester for submodularity of functions $f:\{0,1\}^n\to\range$ with query complexity polynomial in $n$ for $k=O((\log n/\loglog n)^{1/2})$ and constant proximity parameter $\eps$.
\end{abstract}

\thispagestyle{empty}
\setcounter{page}{0}
\newpage
\section{Introduction}\label{sec:intro}
We investigate learning of submodular set functions, defined on the ground set $[n]=\{1,\dots,n\}$. A set function $f: \domain\to\mathbb{R}$ is {\em submodular} if
one of the following equivalent definitions holds:
\begin{enumerate}
\item\label{def:submodular1} $f(S) + f(T) \ge f(S \cup T) + f(S \cap T) \text{ for all } S,T\subseteq[n].$
\item\label{def:submodular2} $f(S \cup \{i\})-f(S)\ge f(T \cup \{i\})-f(T)  \text{ for all } S\subset T \subseteq [n] \text{ and } i\in [n]\setminus T$.
\item\label{def:submodular3} $f(S \cup \{i\}) + f(S \cup \{j\}) \ge f(S \cup \{i,j\}) + f(S) \text{ for all }  S\subseteq[n] \text{ and } i,j \in [n]\setminus S$.
\end{enumerate}
Submodular set functions are important and widely studied, with applications in combinatorial optimization, economics, algorithmic game theory and many other disciplines.
In many contexts, submodular functions are integral and nonnegative, and this is the setting we focus on. Examples of such functions include coverage functions\footnote{\label{fn:coverage-fn}Given sets $A_1,\dots,A_n$ in the universe $U$, a coverage function is $f(S)=|\cup_{j\in S} A_j|$.}, matroid rank functions, functions modeling valuations when the value of each set is expressed in dollars, cut functions of graphs\footnote{Given a graph $G$ on the vertex set $[n]$, the cut function $f(S)$  of $G$
is the number of edges of $G$ crossing the cut $(S, [n]/S)$).}, and cardinality-based set functions, i.e., functions of the form $f(S)=g(|S|)$, where $g$ is concave.

We study submodular functions $f: \domain\to \{0,1, \dots, k\}$, and give a learning algorithm for this class. To obtain our result, we use tools from several diverse areas, ranging from operations research to complexity theory. \snote{Added; please check/comment.}

\paragraph{Structural result.} The first ingredient in the design of our algorithm is a structural result which shows that every submodular function in this class can be represented by a narrow pseudo-Boolean disjunctive normal form (DNF) formula, which naturally generalizes DNF for pseudo-Boolean functions. Pseudo-Boolean DNF formulas are well studied.
(For an introduction to pseudo-Boolean functions and normal forms, see \S 13 of the book by Crama and Hammer~\cite{CramaH11}.)

In the next definition and the rest of the paper, we use domains $\domain$ and $\{0,1\}^n$ interchangeably. They are equivalent because there is a bijection between sets $S\subseteq [n]$ and strings $x_1\dots x_n\in\{0,1\}^n$, where the bit $x_i$ is mapped to 1 if $i\in S$ and to 0 otherwise.

\begin{definition}[Pseudo-Boolean DNF]\label{def:pb-dnf}
Let $x_1,\dots,x_n$ be variables taking values in $\{0,1\}$. A {\em pseudo-boolean DNF of width $k$ and size $s$} (also called a $k$-DNF of size $s$) is an expression of the form
$$f(x_1, \dots, x_n) =\max_{t = 1}^s \Big(a_t \Big(\bigwedge_{i\in A_t}x_i\Big)\Big(\bigwedge_{j\in B_t}\bar{x}_j\Big)\Big),$$
where $a_t$ are constants, $A_t,B_t\subseteq [n]$ and $|A_t|+|B_t|\leq k$ for $t\in[s]$.\snote{Restored from a previous version. Did this change by mistake?}
A pseudo-boolean DNF is {\em monotone} if it contains no negated variables, i.e., $B_t=\emptyset$ for all {\em terms} in the $\max$ expression.
The class of all functions that have pseudo-Boolean $k$-DNF representations with constants $a_t \in \{0, \dots r\}$ is denoted $\myidnf$.
\end{definition}
It is not hard to see that every set function $\f$ has a pseudo-Boolean DNF representation with constants $a_t\in\range$, but in general there is no bound on the width of the formula.

Our structural result, stated next, shows that every submodular function $\f$ can be represented by a pseudo-Boolean $2k$-DNF with constants $a_t\in\range$. Our result is stronger for {\em monotone} functions, i.e., functions satisfying $f(S)\leq f(T)$ for all $S\subseteq T\subseteq [n]$. Examples of monotone submodular functions include coverage functions and matroid rank functions.
\begin{theorem}[DNF representation of submodular functions]\label{thm:k-dnf-decomposition} \gnote{We build on ideas of Gupta et al. In the monotone case our decomposition can be obtained from their result, but for the general case we have to work harder. Key idea: monotone extension from Lovasz82.}
Each submodular function $f: \{0,1\}^n \to \range$ can be represented by a pseudo-Boolean $2k$-DNF with constants $a_t\in \range$ for all $t\in[s]$.
Moreover, each term of the pseudo-Boolean DNF has at most $k$ positive and at most $k$ negated variables, i.e., $|A_t|\leq k$ and $|B_t|\leq k$.\snote{Added. Increases potential usability of our result by others.}
If $f$ is monotone then its representation is a monotone pseudo-Boolean $k$-DNF.
\end{theorem}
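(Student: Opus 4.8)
The plan is to produce the $2k$-DNF one term per input set. For each $S\subseteq[n]$ I will construct sets $A_S\subseteq S$ and $B_S\subseteq[n]\setminus S$ with $|A_S|\le k$ and $|B_S|\le k$ such that $f(S')\ge f(S)$ for every $S'$ in the subcube $[A_S,\,[n]\setminus B_S]=\{S':A_S\subseteq S'\subseteq[n]\setminus B_S\}$; the term associated to $S$ is then $f(S)\cdot\big(\bigwedge_{i\in A_S}x_i\big)\big(\bigwedge_{j\in B_S}\bar x_j\big)$. Granting this, I claim $\max_S\big(f(S)(\bigwedge_{i\in A_S}x_i)(\bigwedge_{j\in B_S}\bar x_j)\big)$ equals $f$: a term (built from some $S$) that is active at an input $S'$ has $S'$ inside its defining box, hence contributes $f(S)\le f(S')$; and the term built from $S'$ itself is active at $S'$ (as $A_{S'}\subseteq S'$ and $B_{S'}\cap S'=\emptyset$) and contributes exactly $f(S')$. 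The constants are values of $f$, hence lie in $\range$; the width bound $|A_S|+|B_S|\le 2k$ and the requested split into at most $k$ positive and at most $k$ negated literals are exactly $|A_S|,|B_S|\le k$.

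To build $A_S$ for a fixed $S$ with $f(S)=v$, I would work inside the subcube $2^S$, set $h(R)=f(S\setminus R)$ (submodular, $h(\emptyset)=v$), and pass to the monotone lower envelope $\phi_h(B)=\min_{R\subseteq B}h(R)$. A three-line argument (pick minimizers $R_1\subseteq B_1$, $R_2\subseteq B_2$ and use $\phi_h(B_1\cup B_2)+\phi_h(B_1\cap B_2)\le h(R_1\cup R_2)+h(R_1\cap R_2)\le h(R_1)+h(R_2)$) shows $\phi_h$ is submodular; it is plainly monotone non-increasing and integer-valued with range inside $\range$. Hence $\psi(A):=\phi_h(S\setminus A)$ is a monotone submodular function on $2^S$ with values in $\{0,\dots,v\}$ and $\psi(S)=v$. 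Let $A_S$ be a minimal subset of $S$ with $\psi(A_S)\ge v$. By the diminishing-returns form of submodularity, minimality forces every element of $A_S$ to contribute at least $1$ along any chain from $\emptyset$ to $A_S$, so $|A_S|\le\psi(A_S)\le v\le k$; and $\psi(A_S)=v$ unpacks to $f(S')\ge v$ for all $A_S\subseteq S'\subseteq S$. Applying the same construction to $S'\mapsto f([n]\setminus S')$ and the set $[n]\setminus S$ yields, symmetrically, $B_S\subseteq[n]\setminus S$ with $|B_S|\le v\le k$ and $f(S')\ge v$ for all $S\subseteq S'\subseteq[n]\setminus B_S$. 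To glue the two one-sided bounds: any $S'$ in the box can be written $S'=(S\setminus X)\cup Y$ with $X\cap A_S=\emptyset$ and $Y\cap(B_S\cup S)=\emptyset$, and submodularity applied to $S'$ and $S$ gives $f(S')\ge f(S\cup Y)+f(S\setminus X)-f(S)\ge v+v-v=v$.

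The monotone case is the easy one: if $f$ is monotone then $f(S')\ge f(S)$ holds for every $S'\supseteq S$ automatically, so $B_S=\emptyset$ works for every $S$ and the resulting formula is a monotone $k$-DNF (equivalently, one may take as terms, for each level $\ell$, the minimal sets $A$ with $f(A)\ge\ell$, the chain argument again bounding $|A|$ by $\ell\le k$). The main obstacle — the only place submodular structure beyond the monotone case is genuinely used — is the construction of $A_S$ (and $B_S$): the key realization is that the correct object is the monotone submodular "envelope" $\psi$, a Lovász-style monotonization of the restriction of $f$, since it is only after monotonizing that one can invoke the minimal-term decomposition and its size bound. The steps to write carefully are: (i) submodularity of the envelope $\phi_h$; (ii) that its range does not grow past $v$ (immediate, as it is a pointwise minimum of values of $f$); and (iii) that the two one-sided boxes combine to the full box via the short submodularity computation above.
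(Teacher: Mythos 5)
Your proof is correct, and it takes a genuinely different route from the paper's.

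The paper proves the monotone case via a recursive algorithm (one recursive call per ``witness'' chain along which $f$ strictly increases), and then handles the general case by following the recursive decomposition of Gupta~\etal: starting from $[n]$, it repeatedly strips off an element $j$ with $f(S\setminus\{j\})>f(S)$, and for each resulting ``center'' $S$ it builds the Lov\'asz monotone envelope $\monlb{S}(Y)=\min_{Y\subseteq Z\subseteq S}f(Z)$, applies the monotone lemma to it, and tacks on $\prod_{j\notin S}\bar x_j$. Correctness rests on a separate cover lemma (Proposition~\ref{prop:monotone-cover}) saying that the ``monotone regions'' $\downdec{S}$ of the recursion's centers cover $\domain$, and the width bound on the negated part comes from the recursion depth being at most $k$.

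You bypass all of that. You put down one term per input point $S$, and the two ingredients that make it work --- the envelope $\psi=\monlb{S}$ and the minimal-set-plus-diminishing-returns argument giving $|A_S|\le f(S)\le k$ --- are the same ones the paper uses, but you apply them a second time, symmetrically, to $T\mapsto f([n]\setminus T)$ to obtain $B_S$, and then glue the two one-sided box guarantees via a single submodularity inequality $f(S')\ge f(S\cup Y)+f(S\setminus X)-f(S)$. This is a nice observation that the paper does not make: it eliminates both the recursion and the cover lemma, trading them for one extra application of the envelope trick and one inequality. The cost is that you emit $2^n$ terms, but since the theorem places no bound on the number of terms, that is immaterial; both constructions can produce exponentially many terms in the worst case. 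Your construction also yields a genuinely ``local'' $B_S$ (a minimal blocking set in $[n]\setminus S$) rather than the paper's uniform $\bar x_j$ for all $j\notin S$, which is a mild strengthening. Your monotone case matches the paper's in spirit (minimal witnesses of each level), though stated more directly. Overall: correct, self-contained, and arguably cleaner than the paper's route, at the expense of economy in the number of terms.
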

Note that the converse of Theorem~\ref{thm:k-dnf-decomposition} is false. E.g., consider the function $f(S)$ that maps $S$ to 0 if $|S|\leq 1$ and to 1 otherwise. It can be represented by a 2-DNF as follows: $f(x_1\dots x_n)=\max_{i,j\in[n]} x_i\wedge x_j$. However, it is not submodular, since version \ref{def:submodular3} of the definition above is falsified with $S=\emptyset, i=1$ and $j=2$.

\snote{Hopefully less terrible, but can be improved.} Our proof of Theorem~\ref{thm:k-dnf-decomposition} builds on techniques developed by Gupta~\etal~\cite{GHRU11} who show how to decompose a given submodular function into Lipschitz submodular functions. We first prove our structural result for monotone submodular functions. We use the decomposition from \cite{GHRU11} to cover the domain of such a function by regions where the function is constant and then capture each region by a monotone term of width at most $k$.  Then we decompose a general submodular function $f$ into monotone regions, as in \cite{GHRU11}. For each such region, we construct a monotone function which coincides with $f$ on that region, does not exceed $f$ everywhere else, and can be represented as a narrow pseudo-Boolean $k$-DNF by invoking our structural result for monotone submodular functions. This construction uses a monotone extension of submodular functions defined by Lovasz~\cite{L82}.

\paragraph{Learning.}
Our main result is a PAC-learning algorithm with membership queries under the uniform distribution for pseudo-Boolean $k$-DNF, which by Theorem~\ref{thm:k-dnf-decomposition} also applies to submodular functions $\f$. We use a (standard) variant of the PAC-learning definition given by Valiant \cite{Val84}.

\begin{definition}[PAC and agnostic learning under uniform distribution]
Let $U^n$ be the uniform distribution on $\{0,1\}^n$.
A class of functions $\mathcal C$ is {\em PAC-learnable} under the uniform distribution if there exists a randomized algorithm $\mathcal A$, called a {\em PAC-learner}, which for every function $f \in \mathcal C$ and every $\epsilon, \delta >0$, with probability at least $1 - \delta$ over the randomness of $\mathcal A$,  outputs a hypothesis $h$, such that
\begin{eqnarray}\label{eq:PAC-guarantee}
\Pr_{x \sim U^n}[h(x) \neq f(x)] \le \epsilon.
\end{eqnarray}
A learning algorithm $\mathcal A$ is {\em proper} if it always outputs a hypothesis $h$ from the class ${\mathcal C}$. A learning algorithm is {\em agnostic} if it works even if the input function $f$ is arbitrary (not necessarily from $\mathcal C$), with $\eps$ replaced by $opt+\eps$ in (\ref{eq:PAC-guarantee}), where $opt$ is the smallest achievable error for a hypothesis in $\cal C$.
\end{definition}

\noindent Our algorithm accesses its input $f$ via {\em membership queries}, i.e., by requesting $f(x)$ on some $x$ in $f$'s domain.
\begin{theorem}\label{thm:learning-pb-dnf}
The class of pseudo-Boolean $k$-DNF formulas on $n$ variables with constants in the range $\{0, \dots, \myrange\}$ is PAC-learnable with membership queries under the uniform distribution with running time polynomial in $n$, $\mywidth^{O(\mywidth \log \myrange / \epsilon)}$, $1/\epsilon$ and $\log (1/\delta)$, even in the agnostic setting.
\end{theorem}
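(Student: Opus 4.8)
The statement concerns only the class $\myidnf$ of pseudo-Boolean $k$-DNFs with constants in $\{0,\dots,r\}$; the consequence for submodular $f\colon\{0,1\}^n\to\range$ is then immediate from Theorem~\ref{thm:k-dnf-decomposition} (taking $r=k$ and width $2k$), so I would prove only the $\myidnf$ statement. The plan is a Fourier-analytic argument in the spirit of Linial--Mansour--Nisan and of Mansour's $k$-DNF learner, with four ingredients: (i) a switching lemma for pseudo-Boolean $k$-DNFs whose term constants are bounded; (ii) a deduction, from that lemma, that every $f\in\myidnf$ has its Fourier mass $\eps$-concentrated on a small family of characters; (iii) the Kushilevitz--Mansour (KM) membership-query algorithm to recover those characters; and (iv) coefficient estimation (PAC) or $\ell_1$-regression (agnostic) followed by rounding.

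For (i)--(ii) I would write $f=\sum_{\theta=1}^r f^{\ge\theta}$, where $f^{\ge\theta}=\bigvee_{t:\,a_t\ge\theta}\big(\bigwedge_{i\in A_t}x_i\big)\big(\bigwedge_{j\in B_t}\bar x_j\big)$ is an ordinary Boolean $k$-DNF, and apply a random restriction leaving each variable free with probability $p=\Theta(1/k)$. By H{\aa}stad's switching lemma each $f^{\ge\theta}|_\rho$ has decision-tree depth $<d$ except with probability $(5pk)^d$, so a union bound over the $r$ thresholds gives that, with probability at least $1-r(5pk)^d$, every $f^{\ge\theta}|_\rho$ is simultaneously a depth-$(<d)$ decision tree; in that case $f|_\rho=\sum_\theta f^{\ge\theta}|_\rho$ is a sum of $r$ polynomials of degree $<d$, hence itself has degree $<d$. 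The standard LMN averaging over $\rho$ then yields $\sum_{|S|>t}\hat f(S)^2=O\!\big(r^3(5pk)^d\big)$ for $t=O(d/p)=O(dk)$; choosing $d=O(\log(r/\eps))$ makes this at most $\eps$ with $t=O(k\log(r/\eps))$. Refining the averaging as Mansour does for $k$-DNFs upgrades this degree bound to a \emph{cardinality} bound: $f$ is $\eps$-concentrated on an (unknown) family $\mathcal F$ of at most $M=k^{O(k\log r/\eps)}$ characters, rather than merely on all $\binom{n}{\le t}$ low-degree ones --- this is the step that keeps the running time polynomial in $n$.

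Given (ii), all but $2\eps$ of $\|f\|_2^2$ lies on characters of magnitude at least $\tau:=\sqrt{\eps/M}$, since the members of $\mathcal F$ with smaller coefficient contribute at most $M\tau^2=\eps$. I would then run the KM algorithm with threshold $\tau$: using membership queries it returns, in time $\mathrm{poly}(n,1/\tau,\log(1/\delta))=\mathrm{poly}(n,M/\eps,\log(1/\delta))$, a list $\mathcal L$ (of size at most $1/\tau^2=M/\eps$ by Parseval) containing every $S$ with $|\hat f(S)|\ge\tau$. In the PAC case, estimate each $\hat f(S)$, $S\in\mathcal L$, to additive error $\tau/2$ by random sampling, and output $h=\mathrm{round}\big(\sum_{S\in\mathcal L}\tilde f(S)\,\chi_S\big)$; the real-valued approximant is within $O(\eps)$ of $f$ in $\|\cdot\|_2^2$, and since $f$ is integer-valued, $\Pr_{x\sim U^n}[h(x)\neq f(x)]\le 4\,\big\|{\textstyle\sum_{S\in\mathcal L}}\tilde f(S)\chi_S-f\big\|_2^2=O(\eps)$. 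Rescaling $\eps$ and substituting $M=k^{O(k\log r/\eps)}$ gives running time polynomial in $n$, $k^{O(k\log r/\eps)}$, $1/\eps$ and $\log(1/\delta)$.

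For the agnostic setting, let $g^\ast\in\myidnf$ achieve the optimal error $opt$. Applying (ii) to $g^\ast$, its mass is $\eps$-concentrated on some $\mathcal F_{g^\ast}$ with $|\mathcal F_{g^\ast}|\le M$, and every character on which $g^\ast$ places mass at least $\eps/M$ is heavy enough in $f$ (because $\|f-g^\ast\|_2$ is small) to be found by KM; so the list $\mathcal L$ produced above spans a good $\ell_1$-approximant to $f$, namely the truncation of $g^\ast$. I would therefore replace step (iv) by the Gopalan--Kalai--Klivans approach: solve, by linear programming on random samples, $\min_{p\in\mathrm{span}\{\chi_S:\,S\in\mathcal L\}}\mathbb E_{x\sim U^n}|p(x)-f(x)|$, and output $\mathrm{round}(p)$ with a uniformly random fractional shift of the rounding threshold; a fixed good shift exists by averaging, and the resulting misclassification error is $opt+O(\eps)$, the passage between $\ell_1$-error and misclassification costing only a $\mathrm{poly}(r)$ factor absorbed into $\eps$. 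The hard part will be ingredients (i)--(ii): establishing the switching lemma in the pseudo-Boolean, bounded-constant regime and pushing it, Mansour-style, from a decision-tree-depth/degree bound to the cardinality bound $M=k^{O(k\log r/\eps)}$ on the non-negligible characters --- in particular controlling the $\log r$ (equivalently $r$) overhead from treating all $r$ threshold DNFs at once and verifying that a single common restriction makes them all shallow simultaneously, so that $f|_\rho$ genuinely collapses to low degree. Everything downstream of that structural result is the standard KM-plus-regression toolkit.
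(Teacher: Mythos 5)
Your proposal reaches the same Fourier-concentration bound $M = k^{O(k\log(r/\eps))}$ and then runs exactly the same downstream machinery (Kushilevitz--Mansour for PAC, Gopalan--Kalai--Klivans for agnostic, rounding the real-valued hypothesis to the nearest grid value), so from the concentration lemma onward the argument coincides with the paper's. The genuinely different step is ingredient (i): you obtain the $r$-factor overhead by the threshold decomposition $f = \sum_{\theta=1}^{r} f^{\ge\theta}$, applying the ordinary Boolean switching lemma to each Boolean $k$-DNF $f^{\ge\theta}$ and union-bounding over the $r$ thresholds. The paper instead reproves the switching lemma from scratch for pseudo-Boolean DNFs, adapting Razborov's injective-encoding argument with a ``canonical labeled decision tree'' and an extra $[r]$-valued coordinate in the encoding (which is where its $r$-factor comes from), obtaining $\Pr[\dtd(f|_\rho)\ge s] < r(7pk)^s$ directly. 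Your route is more modular and elementary --- Håstad's lemma is used as a black box, and the degree bound on $f|_\rho = \sum_\theta f^{\ge\theta}|_\rho$ together with subadditivity of the Fourier $L_1$-norm is all the concentration argument needs. What it gives up is the paper's stronger conclusion that $f|_\rho$ itself has small \emph{decision-tree depth} in the good event: a sum of $r$ depth-$d$ trees is a degree-$d$ polynomial but not in general a depth-$d$ tree, so your union bound does not recover their switching lemma as a standalone statement, though it is never needed in that strength. One point to make explicit in a full write-up: when you ``upgrade Mansour-style'' to the cardinality bound, do the $L_1$ step by $\sum_\theta \mathbb{E}_\rho[L_1(f^{\ge\theta}|_\rho)] \le 2r$ (each summand handled by Mansour's bound for Boolean $k$-DNFs) rather than by invoking $L_1(g)\le 2^{\dtd(g)}$ on $f|_\rho$ directly, since you no longer control $\dtd(f|_\rho)$.
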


\snote{Please check.}Our (non-agnostic) learning algorithm is a generalization of Mansour's PAC-learner for $k$-DNF~\cite{M95}.
It consists of running the algorithm of Kushilevitz and Mansour~\cite{KM91} for learning functions that can be approximated by functions with few non-zero Fourier coefficients, and thus has the same running time (and the same low-degree polynomial dependence on $n$). To be able to use this algorithm, we prove (in Lemma~\ref{lem:pb-dnf-approximation}) that all functions in $\mydnf$ have this property. The agnostic version of our algorithm follows from the Fourier concentration result in Lemma~\ref{lem:pb-dnf-approximation} and the work of Gopalan, Kalai and Klivans~\cite{GKK08}.

The key ingredient in the proof of Lemma~\ref{lem:pb-dnf-approximation} (on Fourier concentration) is a generalization of H{\aa}stad's switching lemma~\cite{H86,B94} for standard DNF formulas to pseudo-Boolean DNF. Our generalization (formally stated in Lemma~\ref{lem:gen-switching-lemma}) asserts that a function $f \in \myidnf$, restricted on large random subset of variables to random Boolean values, with large probability can be computed by a decision tree of small depth. (See Section~\ref{sec:switching-lemma} for definitions of random restrictions and decision trees.) Crucially, our bound on the probability that a random restriction of $f$ has large decision-tree complexity is only a factor of $r$ larger than the corresponding guarantee for the Boolean case.\gnote{Say that our main insight is that one can lose only a multiplicative factor in range in the guarantee.}\snote{This sentence is supposed to convey that. ("Insight" means "idea", this seems more like a result. Maybe you can identify "insights" needed to get this dependence.)}

Theorems~\ref{thm:learning-pb-dnf} and~\ref{thm:k-dnf-decomposition} imply the following corollary.
\begin{corollary}\label{cor:learning-submodular}
The class of submodular functions $f \colon \{0,1\}^n \rightarrow \range$ is PAC-learnable with membership queries under the uniform distribution in time polynomial in $n$, $k^{O(k \log {k / \epsilon})}$ and $\log (1/\delta)$.\snote{Even in the agnostic setting.}
\end{corollary}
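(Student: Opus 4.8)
The plan is to obtain this corollary by a direct composition of the structural result with the learning result, so there is essentially no new argument to give. First I would invoke Theorem~\ref{thm:k-dnf-decomposition}: every submodular function $f\colon\{0,1\}^n\to\range$ is representable by a pseudo-Boolean $2k$-DNF whose term constants lie in $\range$. In the notation of Definition~\ref{def:pb-dnf} this says precisely that the class of submodular functions with range $\range$ is contained in $\text{DNF}^{2k,k}$. It is worth noting that this representation is needed only to certify membership of $f$ in the concept class; the learner itself never constructs it, since it accesses $f$ solely through membership queries.

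Second, I would apply Theorem~\ref{thm:learning-pb-dnf} with the width parameter set to $2k$ and the range parameter set to $r=k$. Since the concept class of submodular functions is a subclass of $\text{DNF}^{2k,k}$, the PAC-learner with membership queries under the uniform distribution guaranteed by that theorem for $\text{DNF}^{2k,k}$ automatically achieves the required $\epsilon$-accuracy (with confidence $1-\delta$) on every submodular $f$; and because Theorem~\ref{thm:learning-pb-dnf} holds in the agnostic setting, the resulting learner does too.

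Finally, I would simplify the running-time bound. Theorem~\ref{thm:learning-pb-dnf} instantiated as above gives running time polynomial in $n$, $(2k)^{O(2k\log k/\epsilon)}$, $1/\epsilon$ and $\log(1/\delta)$. Using $\log(2k)=O(\log k)$ one checks that $(2k)^{O(2k\log k/\epsilon)} = 2^{O(k(\log k)^2/\epsilon)} = k^{O(k\log k/\epsilon)}$, and the stray $1/\epsilon$ factor is absorbed into this term; hence the running time is polynomial in $n$, $k^{O(k\log k/\epsilon)}$ and $\log(1/\delta)$, as claimed.

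I expect no genuine obstacle here: the corollary is a mechanical consequence of the two theorems, and the only "work" is the parameter bookkeeping above. All of the substance lies in the statements being composed — Theorem~\ref{thm:k-dnf-decomposition}, proved via the Lovász monotone extension, and Theorem~\ref{thm:learning-pb-dnf}, which rests on the generalized switching lemma and its crucial only-$O(r)$-factor loss in the range.
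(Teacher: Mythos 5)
Your proposal is correct and takes exactly the route the paper intends: the paper itself gives no separate argument, simply stating that Corollary~\ref{cor:learning-submodular} follows by composing Theorem~\ref{thm:k-dnf-decomposition} (submodular $\subseteq \mydnf$ with width $2k$ and range $k$) with Theorem~\ref{thm:learning-pb-dnf}, which is precisely your first two paragraphs. The parameter bookkeeping and the absorption of the $1/\epsilon$ factor are also as intended (reading the exponent as $O(k\log(k/\epsilon))$, consistent with Lemma~\ref{lem:pb-dnf-approximation}, yields $(2k)^{O(2k\log(k/\epsilon))}=k^{O(k\log(k/\epsilon))}$ just as directly).
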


\paragraph{Implications for testing submodularity.} Our results give property testers for submodularity of functions $\f$. A {\em property tester} \cite{RS96,GGR98}
 is given oracle access to an object  and a proximity parameter $\eps \in (0,1)$. If the object has the desired property, the tester \emph{accepts} it with probability at least $2/3$; if the object is $\eps$-far from having the desired property then the tester \emph{rejects} it with probability at least $2/3$. Specifically, for properties of functions, $\eps$-far means that a given function differs on at least an $\eps$ fraction of the domain points from any function with the property.

 As we observe in Proposition~\ref{prop:lerner-to-proper-learner}, a learner for a discrete class (e.g., the class of functions $\f$) can be converted to a proper learner with the same query complexity (but huge overhead in running time). Thus, Corollary~\ref{cor:learning-submodular} implies a tester for submodularity of functions $\f$  with query complexity polynomial in $n$ and $k^{O(k \log {k / \epsilon})}$,
making progress on a question posed by Seshadhri \cite{Sesh11}.

%



\subsection{Related work}\label{previous-work}
\paragraph{Structural results for Boolean submodular functions.} For the special case of Boolean functions, characterizations of submodular and monotone submodular functions in terms of simple DNF formulas are known. A Boolean function is monotone submodular if and only if it can be represented as a monotone 1-DNF (see, e.g., Appendix A in~\cite{BH11}).
A Boolean function is submodular if and only if it has a pure (without singleton terms) 2-DNF representation \cite{EHP97}.

\paragraph{Learning submodular functions.}
The problem of learning submodular functions has recently attracted significant interest. The focus on learning-style guarantees, which allow one to make arbitrary mistakes on some small portion of the domain, is justified by a negative results of Goemans~\etal~\cite{GoemansHIM09}. It demonstrates
that every algorithm that makes a polynomial in $n$ number of queries to a monotone submodular function (more specifically, even a matroid rank function) and tries to approximate it on all points in the domain, must make an $\Omega(\sqrt{n}/\log n)$ multiplicative error on some point.

Using results on concentration of Lipschitz submodular functions~\cite{BCM00, BCM09, V11} and on noise-stability of submodular functions~\cite{CheraghchiKKL12}, significant progress on learning submodular functions was obtained by Balcan and Harvey~\cite{BH11,BH11a}, Gupta~\etal~\cite{GHRU11} and Cheraghchi~\etal~\cite{CheraghchiKKL12}.
These works obtain learners that {\em approximate} submodular functions, as opposed to learning them exactly, on an $\epsilon$ fraction of values in the domain. However, their learning algorithms generally work with weaker access models and with submodular functions over more general ranges.

Balcan and Harvey's algorithms learn a function within a given {\em multiplicative error} on all but $\eps$ fraction of the probability mass (according to a specified distribution on the domain). Their first algorithm learns {\em monotone} nonnegative submodular functions over $\domain$ within a {\em multiplicative factor of $\sqrt{n}$} over {\em arbitrary} distributions using only {\em random examples} in polynomial time. For the special case of {\em product distributions} and {\em monotone} nonnegative submodular functions {\em with Lipschitz constant 1}, their second algorithm can learn {\em within a constant factor} in polynomial time.

Gupta~\etal~\cite{GHRU11} design an algorithm that learns a submodular function with the range $[0,1]$ within a given {\em additive error $\alpha$} on all but $\eps$ fraction of the probability mass (according to a specified {\em product distribution} on the domain). Their algorithm requires membership queries, but works {\em even when these queries are answered with additive error $\alpha/4$.} It takes $n^{O(\log (1/\eps)/ \alpha^2)}$ time.

Cheraghchi~\etal~\cite{CheraghchiKKL12} also work with {\em additive error}. Their learner is agnostic and only uses {\em statistical queries}. It produces a hypothesis which (with probability at least $1-\delta$) has the expected additive error $opt+\alpha$ with respect to a {\em product distribution}, where $opt$ is the error of the best concept in the class. Their algorithm runs in time polynomial in $n^{O(1/\alpha)}$ and $\log (1/\delta)$.

Observe that the results in \cite{GHRU11} and \cite{CheraghchiKKL12} directly imply an $n^{O(\log (1/\eps) k^2)}$ time algorithm for our setting, by rescaling our input function to be in $[0,1]$ and setting the error $\alpha=1/(2r).$ The techniques in \cite{GHRU11} also imply $n^{O(k)}$ time complexity for non-agnostically\snote{Check this; I am not sure how to get this claimed dependence} learning submodular functions in this setting, for fixed $\eps$ and $\delta$. To the best of our knowledge, this is the best dependence on $n$, one can obtain from previous work.

Finally, Chakrabarty and Huang \cite{ChaH12} gave an exact learning algorithm for coverage functions, a subclass of monotone submodular functions. Their algorithm makes $O(n|U|)$ queries, where $U$ is the size of the universe. (Coverage functions are defined as in Footnote~\ref{fn:coverage-fn} with additional nonnegative weight for each set, and $f(S)$ equal to the weight of $\cup_{j\in S} A_j$ instead of the cardinality.)

\paragraph{Property testing submodular functions.}
The study of submodularity in the context of property testing was initiated by Parnas, Ron and Rubinfeld~\cite{PRR03}.
Seshadhri and Vondrak~\cite{SV11} gave the first sublinear (in the size of the domain) tester for submodularity of set functions. Their tester works for all ranges and has query and time complexity  $\left(1/\epsilon\right)^{O(\sqrt{n}\log n)}$.
They also showed a reduction from testing monotonicity 
to testing submodularity which, together with a lower bound for testing monotonicity given by
Blais, Brody and Matulef~\cite{BBM11}, implies a lower bound of $\Omega(n)$ on the query complexity of testing submodularity for an arbitrary range and constant $\epsilon > 0$.

Given the large gap between known upper and lower bounds on the complexity of testing submodularity, Seshadhri \cite{Sesh11} asked for testers for several important subclasses of submodular functions. The exact learner of Chakrabarty and Huang \cite{ChaH12} for coverage functions, mentioned above, gives a property tester for this class with the same query complexity.

For the special case of Boolean functions, in the light of the structural results mentioned above, one can test if a function is monotone submodular with $O(1/\epsilon)$ queries by using the algorithm from~\cite{PRS02} (Section 4.3) for testing monotone monomials.

\section{Representing submodular functions as pseudo-Boolean DNFs}\label{sec:submodular-to-DNF}
In this section, we prove Theorem~\ref{thm:k-dnf-decomposition} that shows that every submodular function over a bounded (nonnegative) integral range can be represented by a narrow pseudo-Boolean DNF. After introducing notation used in the rest of the section (in Definition~\ref{def:down-up}), we prove the theorem for the special case when $f$ is monotone submodular (restated in Lemma~\ref{lem:k-dnf-decomposition-monotone}) and then present the proof for the general case. \snote{Change, please check}In the proof, we give a recursive algorithm for constructing pseudo-Boolean DNF representation which has the same structure of recursive calls as the decomposition algorithm of Gupta~\etal~\cite{GHRU11}. Our contribution is in showing how to use these calls to get a pseudo-Boolean $2k$-DNF representation of the input function.

\begin{definition}[$\down{S}$ and $\up{S}$]\label{def:down-up}
For a set $S \in \domain$, we denote the collection of all subsets of $S$ by $\down{S}$ and the collection of all supersets of $S$ by $\up{S}$.
\end{definition}

\begin{lemma}[DNF representation of monotone submodular functions]\label{lem:k-dnf-decomposition-monotone}
Each monotone submodular function $f: \{0,1\}^n \to \range$ can be represented by a pseudo-Boolean monotone $2k$-DNF with constants $a_t\in \range$ for all $t\in[s]$.
\end{lemma}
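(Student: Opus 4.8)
The plan is to read the DNF directly off the level sets of $f$. For $\ell\in\{1,\dots,k\}$ put $L_\ell=\{T\subseteq[n] : f(T)\ge\ell\}$; since $f$ is monotone each $L_\ell$ is an up-set, so it is the union $\bigcup_S\up{S}$ over its inclusion-minimal elements $S$. As $f(T)=\max\{\ell : T\in L_\ell\}$ (with the convention $\max\emptyset=0$, which is consistent since an inactive term of a pseudo-Boolean DNF contributes $0$), we get the representation
$$f(x_1,\dots,x_n)=\max_{\ell=1}^{k}\ \max_{S}\Big(\ell\cdot\bigwedge_{i\in S}x_i\Big),$$
where the inner maximum runs over the inclusion-minimal $S\in L_\ell$ (and if $f\equiv 0$ we use the single term with constant $0$ and empty conjunction). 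This is a monotone pseudo-Boolean DNF with constants in $\{1,\dots,k\}\subseteq\range$; it is essentially the ``cover the domain by regions of constancy'' decomposition of Gupta~\etal~\cite{GHRU11} specialized to a bounded integer range. So the entire content of the lemma reduces to bounding its width: $|S|\le k$ for every $S$ that is inclusion-minimal in some $L_\ell$.

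That width bound is the step I expect to be the crux, and it is where submodularity and integrality of the range both enter. Fix such an $S=\{s_1,\dots,s_m\}$ in an arbitrary order and set $S_j=\{s_1,\dots,s_j\}$. Minimality of $S$ in $L_\ell$ gives $f(S\setminus\{s_j\})<\ell$, hence $f(S\setminus\{s_j\})\le\ell-1$ by integrality, so $f(S)-f(S\setminus\{s_j\})\ge 1$ for every $j$. Applying the diminishing-returns form of submodularity (version~\ref{def:submodular2}) to the nested pair $S_{j-1}\subseteq S\setminus\{s_j\}$ with element $s_j$ gives $f(S_j)-f(S_{j-1})\ge f(S)-f(S\setminus\{s_j\})\ge 1$; summing over $j$ telescopes to $f(S)\ge f(\emptyset)+m\ge m$, so $|S|=m\le f(S)\le k$.

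It then remains to check correctness of the displayed formula, which is routine lattice bookkeeping. For any $T$ and any term $(\ell,S)$ active at $T$ (that is, $S\subseteq T$) we have $\ell\le f(S)\le f(T)$ by monotonicity, so the formula is at most $f(T)$. Conversely, if $v:=f(T)\ge 1$, let $S\subseteq T$ be inclusion-minimal with $f(S)=v$; any $S'\subsetneq S$ satisfies $f(S')\le v$ by monotonicity, so $f(S')\ge v$ would force $f(S')=v$ and contradict minimality of $S$ inside $T$. Hence $S$ is inclusion-minimal in $L_v$, so $|S|\le k$ by the previous paragraph, the term $(v,S)$ is present and active at $T$, and it contributes exactly $v$. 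Thus the formula computes $f$, and since each term carries at most $k$ un-negated literals it is a monotone pseudo-Boolean $k$-DNF — in particular a monotone $2k$-DNF, and in fact the stronger $k$-DNF bound promised for the monotone case of Theorem~\ref{thm:k-dnf-decomposition}.
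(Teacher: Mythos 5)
Your proof is correct, and it gives the same stronger width bound ($k$-DNF, not just $2k$-DNF) that the paper implicitly obtains. However, it is organized differently from the paper's proof, and the difference is worth noting. The paper builds the formula via a recursive algorithm (\textsc{Monotone-DNF}) that starts from $\emptyset$ and descends into $S\cup\{j\}$ whenever $f$ strictly increases, maintaining the invariant $f(S)\ge|S|$ to bound the width; correctness is then proved by taking, for a given $Y$, the smallest $T\subseteq Y$ with $f(T)=f(Y)$ and arguing via submodularity that $f$ restricted to $\down{T}$ is strictly increasing, so the algorithm must have reached $T$. You instead read the formula directly off the level sets $L_\ell=\{T : f(T)\ge\ell\}$, take the inclusion-minimal elements of each as the terms, and prove the width bound by a clean telescoping application of diminishing returns: minimality of $S$ in $L_\ell$ gives $f(S)-f(S\setminus\{s_j\})\ge1$ for every $j$, and submodularity pushes this down to each step of a chain from $\emptyset$ to $S$, yielding $|S|\le f(S)\le k$. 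The two arguments are essentially dual: the set $T$ singled out in the paper's correctness proof is exactly your inclusion-minimal witness in $L_{f(T)}$, and your telescoping is the non-algorithmic form of the paper's invariant $f(S)\ge|S|$. Your version has the advantage of being declarative (the terms are characterized intrinsically rather than as the output of a traversal), which makes the width bound and the correctness check independent and short; the paper's recursive formulation has the advantage of aligning with the decomposition machinery from Gupta~\etal\ that it reuses in the non-monotone case of Theorem~\ref{thm:k-dnf-decomposition}.
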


\begin{proof}
Algorithm~\ref{alg:monotone-dnf} below, with the initial call \textsc{Monotone-DNF}$(f,\emptyset)$, returns the collection of terms in a pseudo-boolean DNF representation of $f$.

\begin{algorithm}
\SetKwInOut{Input}{input}\SetKwInOut{Output}{output}
\Input{Oracle access to $\f$, argument $\myarg \in \domain$.} \Output{Collection $C$ of monotone terms of width at most $k$.}
\DontPrintSemicolon
\BlankLine
\nl $C \leftarrow (f(\myarg) \cdot \bigwedge\limits_{i \in \myarg} x_i)$\;
\nl \For{$j \in [n]\setminus \myarg$} {
\nl	\If {$f(\myarg \cup \{j\}) > f(\myarg)$} {
\nl 	$C \leftarrow C \cup $ \textsc{Monotone-DNF}$(f, \myarg \cup \{j\})$.}}
\nl \Return $C$ \;
\caption{\textsc{Monotone-DNF}$(f, \myarg)$.\label{alg:monotone-dnf}}
\end{algorithm}

First, note that the invariant $f(\myarg) \ge |\myarg|$ is maintained for every call \textsc{Monotone-DNF}$(f,\myarg)$. Since the maximum value of $f$ is at most $k$, there are no calls with $|\myarg| > k$. Thus, every term in the collection returned by \textsc{Monotone-DNF}$(f, \emptyset)$ has width at most $k$. By definition, all terms are monotone.

Next, we show that the resulting formula $\max\limits_{C_i \in C} C_i$ exactly represents $f$.
For all $\myvar \in \domain$ we have $f(\myvar) \ge \max\limits_{C_i \in C} C_i(\myvar)$ by monotonicity of $f$.
To see that $f(\myvar) \le \max\limits_{C_i \in C} C_i(\myvar)$
let $\mathcal T = \{Z \mid Z \subseteq \myvar, f(Z) = f(\myvar)\}$ and $T$ be a set of the smallest size in $\mathcal T$.
If there was a recursive call \textsc{Monotone-DNF}$(f,T)$ then the term added by this recursive call would ensure the inequality.
If $T = \emptyset$ then such a call was made.
Otherwise, consider the set ${\mathcal U} = \{T \setminus \{j\} \mid j\in T\}$.
By the choice of $T$, we have $f(Z) < f(T)$ for all $Z \in {\mathcal U}$.
By submodularity of $f$, this implies that the restriction of $f$ on $\down{T}$ is a strictly increasing function. Thus, the recursive call \textsc{Monotone-DNF}$(f,T)$ was made and the term added by it guarantees the inequality.
\end{proof}

For a collection $\mathcal S$ of subsets of $[n]$, let $f_{\mathcal S} \colon \mathcal S \rightarrow \mathbb R$ denote the restriction of a function $f$ to the union of sets in $\mathcal S$.
We use notation $\ind{\mathcal S} \colon \domain \rightarrow \{0,1\}$ for the indicator function defined by $\ind{\mathcal S}(Y) = 1$ iff $Y \in \bigcup\limits_{S\in\mathcal S}S$.

\begin{proofof}{Theorem~\ref{thm:k-dnf-decomposition}}
For a general submodular function, the formula can be constructed using Algorithm~\ref{alg:dnf}, with the initial call \textsc{DNF}$(f, [n])$. The algorithm description uses the function $\monlb{S}$, defined next.
\begin{definition}[Function $\monlb{S}$]
For a set $S\subseteq [n]$, define the function $\monlb{S} \colon \down{S} \rightarrow \range$ as follows: $\monlb{S}(Y) = \min_{Y \subseteq Z \subseteq S} f(Z)$.
\end{definition}
\begin{proposition}[Proposition 2.1 in \cite{L82}]\label{prop:monotone-lb}\gnote{Rephrase for arbitrary function.}
If $f_{\down{S}}$ is a submodular function, then $\monlb{S}$ is a monotone submodular function.
\end{proposition}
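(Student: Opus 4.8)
The plan is to verify the two defining properties of a monotone submodular function directly from the formula $\monlb{S}(Y) = \min_{Y \subseteq Z \subseteq S} f(Z)$, using only the submodularity of $f$ on $\down{S}$ (and noting along the way that $\monlb{S}$ maps into $\range$, being a minimum of values of $f$).

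First I would dispose of monotonicity, which is immediate from the definition: if $Y_1 \subseteq Y_2 \subseteq S$, then every $Z$ with $Y_2 \subseteq Z \subseteq S$ also satisfies $Y_1 \subseteq Z \subseteq S$, so the minimum defining $\monlb{S}(Y_2)$ is taken over a sub-collection of the sets defining $\monlb{S}(Y_1)$; hence $\monlb{S}(Y_1) \le \monlb{S}(Y_2)$.

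For submodularity, fix $A, B \subseteq S$ and choose minimizers $Z_A, Z_B$, so that $A \subseteq Z_A \subseteq S$ with $\monlb{S}(A) = f(Z_A)$, and likewise $B \subseteq Z_B \subseteq S$ with $\monlb{S}(B) = f(Z_B)$. The key observations are that $Z_A \cup Z_B$ is a feasible witness for $A \cup B$ (it contains $A \cup B$ and is contained in $S$), and $Z_A \cap Z_B$ is a feasible witness for $A \cap B$ (it contains $A \cap B$, since $A \cap B \subseteq Z_A$ and $A \cap B \subseteq Z_B$, and it is contained in $S$). Hence $\monlb{S}(A \cup B) \le f(Z_A \cup Z_B)$ and $\monlb{S}(A \cap B) \le f(Z_A \cap Z_B)$. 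Applying submodularity of $f$ on $\down{S}$ to the pair $Z_A, Z_B$ (both subsets of $S$) gives $f(Z_A) + f(Z_B) \ge f(Z_A \cup Z_B) + f(Z_A \cap Z_B)$, and chaining these inequalities yields $\monlb{S}(A) + \monlb{S}(B) \ge \monlb{S}(A \cup B) + \monlb{S}(A \cap B)$, as required.

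I do not expect a real obstacle; the only point requiring care is to confirm that every set to which submodularity of $f$ is applied, namely $Z_A, Z_B, Z_A \cup Z_B, Z_A \cap Z_B$, lies in the domain $\down{S}$ on which $f$ is assumed submodular — which holds because $Z_A, Z_B \subseteq S$. Thus the argument never leaves $\down{S}$, which is exactly the regime in which the proposition is stated and used.
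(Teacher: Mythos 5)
Your proof is correct and is the standard argument for this fact. The paper itself does not supply a proof — it cites Proposition~2.1 of Lov\'asz's survey~\cite{L82} — so there is nothing to compare against; but your verification (monotonicity by set-inclusion of the index collections, submodularity by applying $f$'s submodularity to the two chosen minimizers $Z_A, Z_B$ and observing that their union and intersection are feasible witnesses lying in $\down{S}$) is exactly the argument one would expect and has no gaps.
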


\begin{algorithm}
\SetKwInOut{Input}{input}\SetKwInOut{Output}{output}
\Input{Oracle access to $\f$, argument $\myarg \in \domain$.} \Output{Collection $C$ of terms, each containing at most $k$ positive and at most $k$ negated variables.}
\DontPrintSemicolon
\BlankLine
\nl $C_{mon} \leftarrow $\textsc{Monotone-DNF}$(\monlb{\myarg}, \emptyset)$\; \label{ln:monotone-call}
\nl $C \leftarrow \bigcup\limits_{C_i\in C_{mon}}(C_i\cdot(\bigwedge\limits_{i \in [n]\setminus\myarg} \bar{x}_i))$\; \label{ln:multiply-terms}
\nl \For{$j \in \myarg$} {
\nl	\If {$f(\myarg \setminus \{j\}) > f(\myarg)$} {
\nl 	$C \leftarrow C \cup $ \textsc{DNF}$(f, \myarg \setminus \{j\})$.}}
\nl \Return $C$ \;
\caption{\textsc{DNF}$(f, \myarg)$.\label{alg:dnf}}
\end{algorithm}

Let $\mathcal S$ be the collection of sets $\myarg \subseteq [n]$
for which a recursive call is made when \textsc{DNF}$(f, [n])$ is executed.
For a set $S \in \mathcal S$, let $B(S) = \{j \mid f(S \setminus \{j\}) \leq f(S)\}$ be the set consisting of elements such that if we remove them from $S$, the value of the function does not increase.
Let the {\em monotone region} of $S$ be defined by $\downdec{S} = \{Z \mid (S \setminus B(S)) \subseteq Z \subseteq S\} = \down{S} \cap \up{(S \setminus B(S))}$. By submodularity of $f,$ the restriction $f_{\downdec{S}}$ is a monotone nondecreasing function.\snote{Need more explanations.}

\begin{proposition}\label{prop:lb-cover}
Fix $S \in \mathcal S$. Then  $f(Y) \ge \monlb{S}(Y)$ for all $Y \in \down{S}$.
Moreover,  $f(Y) = \monlb{S}(Y)$ for all $Y \in \downdec{S}$.
\end{proposition}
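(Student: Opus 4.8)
The plan is to dispose of the first inequality in one line and then reduce the equality on $\downdec{S}$ to the assertion, already made in the surrounding text (and flagged as needing more detail), that $f$ restricted to the interval $\downdec{S}=\down{S}\cap\up{(S\setminus B(S))}$ is monotone nondecreasing.

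For the first part: for any $Y\in\down{S}$ the set $Z=Y$ itself satisfies $Y\subseteq Z\subseteq S$, so $Y$ is among the sets over which the minimum defining $\monlb{S}(Y)$ is taken; hence $\monlb{S}(Y)=\min_{Y\subseteq Z\subseteq S}f(Z)\le f(Y)$, which is exactly $f(Y)\ge\monlb{S}(Y)$.

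For the second part, I would first spell out why $f$ is monotone nondecreasing on $\downdec{S}$. It suffices to show that whenever $S\setminus B(S)\subseteq Y\subseteq S$ and $j\in S\setminus Y$ we have $f(Y\cup\{j\})\ge f(Y)$; the general statement $f(Y)\le f(Z)$ for $Y\subseteq Z$ inside $\downdec{S}$ then follows by adding the elements of $Z\setminus Y$ one at a time, noting that every intermediate set stays in $\downdec{S}$. To prove this one-step inequality, observe that $Y\supseteq S\setminus B(S)$ forces $S\setminus Y\subseteq B(S)$, so $j\in B(S)$, i.e. $f(S\setminus\{j\})\le f(S)$. Since $j\notin Y$ and $Y\subseteq S$, we have $Y\subseteq S\setminus\{j\}$, and the diminishing-marginal-returns form of submodularity gives $f(Y\cup\{j\})-f(Y)\ge f\big((S\setminus\{j\})\cup\{j\}\big)-f(S\setminus\{j\})=f(S)-f(S\setminus\{j\})\ge 0$, as desired. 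With monotonicity in hand, fix $Y\in\downdec{S}$ and let $Z$ be any set with $Y\subseteq Z\subseteq S$. Then $S\setminus B(S)\subseteq Y\subseteq Z\subseteq S$, so $Z\in\downdec{S}$, and monotonicity gives $f(Z)\ge f(Y)$. Thus the minimum defining $\monlb{S}(Y)$ is attained at $Z=Y$, so $\monlb{S}(Y)=f(Y)$.

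The only genuinely non-trivial point is the monotonicity of $f_{\downdec{S}}$; the care needed there is to apply submodularity in the correct direction (smaller sets have at least as large marginals) and to check that the chaining argument never leaves the interval $\downdec{S}$. Note also that the hypothesis $S\in\mathcal S$ is not used anywhere — the argument goes through for an arbitrary $S\subseteq[n]$, relying only on the submodularity of $f$ (equivalently, of its restriction to the sublattice $\down{S}$, as in $\propref{monotone-lb}$).
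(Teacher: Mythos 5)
Your proof is correct and takes the same approach as the paper's: $Z=Y$ lies in the minimization range, which gives $\monlb{S}(Y)\le f(Y)$ on $\down{S}$, and monotonicity of $f_{\downdec{S}}$ (together with the observation that $Y\subseteq Z\subseteq S$ and $Y\in\downdec{S}$ force $Z\in\downdec{S}$) shows the minimum is attained at $Z=Y$ when $Y\in\downdec{S}$. The only difference is that you also supply the proof that $f_{\downdec{S}}$ is monotone nondecreasing — a fact the paper merely asserts in the surrounding text, with an internal note that more explanation is wanted — and your argument for it, via the diminishing-returns form of submodularity applied to $Y\subseteq S\setminus\{j\}$ with $j\in B(S)$, is correct.
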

\begin{proof}
By the definition of $\monlb{S}$, we have
$\monlb{S}(Y) = \min_{Y \subseteq Z \subseteq S} f(Z) \le f(Y)$ for all $Y \in \down{S}$.
Since the restriction $f_{\downdec{S}}$ is monotone nondecreasing,  $\monlb{S}(Y) = \min_{Y \subseteq Z \subseteq S} f(Z) = f(Y)$ for all $Y \in \downdec{S}$.
\end{proof}

The following proposition is implicit in~\cite{GHRU11}. For completeness, we prove it in Appendix~\ref{app:omitted-proofs}.

\begin{proposition}\label{prop:monotone-cover}
For 
all functions $\f$, the collection of all monotone regions of sets in $\mathcal S$ forms a cover of the domain, namely, $\cup_{S \in \mathcal S} \downdec{S} = \domain$.
\end{proposition}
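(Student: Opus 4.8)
The plan is to prove the covering statement by a simple descent argument down the recursion tree of \textsc{DNF}$(f,[n])$: starting from the root $[n]$, I repeatedly follow a coordinate that is ``forced out'' of the target set, until the target lands inside the current monotone region.

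First I would record the two facts that make the recursion tractable. Recall that $\mathcal S$ is the set of arguments on which \textsc{DNF} is invoked during the execution of \textsc{DNF}$(f,[n])$ (in particular $[n]\in\mathcal S$), that $B(S)=\{j\in S: f(S\setminus\{j\})\le f(S)\}$, and that $\downdec{S}=\{Z: S\setminus B(S)\subseteq Z\subseteq S\}$. Reading off Algorithm~\ref{alg:dnf}, the crucial observation is: if $S\in\mathcal S$ and $j\in S\setminus B(S)$ — equivalently $f(S\setminus\{j\})>f(S)$ — then the recursive call on $S\setminus\{j\}$ is made, so $S\setminus\{j\}\in\mathcal S$. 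Note this proposition uses no submodularity at all, only the branching rule of the algorithm.

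Next, fix an arbitrary $Y\in\domain$ and build a chain $[n]=S_0\supsetneq S_1\supsetneq\cdots$ of sets in $\mathcal S$, all containing $Y$, as follows. Given $S_i\in\mathcal S$ with $Y\subseteq S_i$: if $S_i\setminus B(S_i)\subseteq Y$, then $S_i\setminus B(S_i)\subseteq Y\subseteq S_i$, so $Y\in\downdec{S_i}$ and the construction stops; otherwise pick any $j\in (S_i\setminus B(S_i))\setminus Y$ and set $S_{i+1}=S_i\setminus\{j\}$, which lies in $\mathcal S$ by the observation, still contains $Y$ since $j\notin Y$, and is strictly smaller than $S_i$.

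Finally I would argue termination and conclude: the cardinalities $|S_i|$ strictly decrease and are bounded below by $|Y|$, so the chain is finite, and it can only end via the ``stop'' condition, which yields $Y\in\downdec{S_i}$ for some $S_i\in\mathcal S$. Since $Y\in\domain$ was arbitrary, $\cup_{S\in\mathcal S}\downdec{S}=\domain$. I do not expect a genuine obstacle here; the only point that needs care is the bookkeeping — checking that the algorithm's branching condition $f(S\setminus\{j\})>f(S)$ is exactly ``$j\notin B(S)$'', so that whenever $Y$ is not yet covered by $\downdec{S}$ there is always a coordinate along which the recursion actually descends while keeping $Y$ inside the argument.
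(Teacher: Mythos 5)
Your proof is correct and captures the same underlying idea as the paper's: trace a chain through the recursion tree of \textsc{DNF}$(f,[n])$ from $[n]$ down to some $S \in \mathcal S$ with $Y \in \downdec{S}$, using exactly the observation that the branching condition $f(S\setminus\{j\}) > f(S)$ means $j \notin B(S)$, so that whenever $Y$ is not yet in $\downdec{S_i}$ there is a coordinate $j\in (S_i\setminus B(S_i))\setminus Y$ along which the recursion descends while keeping $Y$ inside. The paper phrases this as a reverse induction on $f([n])$ (applying the inductive hypothesis to the restricted function $f_{\down{Z}}$ for $|Z|=n-1$), whereas you phrase it as a direct descent with termination measured by $|S_i|$; the two are interchangeable, and your version is arguably cleaner bookkeeping.
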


Lemma~\ref{lem:k-dnf-decomposition-monotone} and Proposition~\ref{prop:monotone-lb} give that
the collection of terms $C_{mon}$, constructed in Line~\ref{ln:monotone-call} of Algorithm~\ref{alg:dnf}, corresponds to a monotone pseudo-Boolean $k$-DNF representation for $\monlb{\myarg}$.
By the same argument as in the proof of Lemma~\ref{lem:k-dnf-decomposition-monotone}, $|\myarg| \geq n - k$ for all $\myarg \in \mathcal S$, since the maximum of $f$ is at most $k$. Therefore, Line~\ref{ln:multiply-terms}  of Algorithm~\ref{alg:dnf} adds at most $n - |\myarg|$ negated variables to every term of $C_{mon}$, resulting in terms with at most $k$ positive and at most $k$ negated variables.

It remains to prove that the constructed formula represents $f$. For a set $\myarg$, let $C_\myarg$ denote the collection of terms obtained on Line~\ref{ln:multiply-terms}  of Algorithm~\ref{alg:dnf}.  By construction, $C_\myarg(Y) = \monlb{\myarg} \cdot \ind{\down{\myarg}}(Y)$ for all $Y \in \domain$. For every $Y \in \domain$, the first part of Proposition~\ref{prop:lb-cover} implies that $C_\myarg(Y) = \monlb{\myarg} (Y) \cdot \ind{\down{\myarg}} (Y) \le f(Y)$, yielding $\max_{\myarg \in \mathcal S} C_\myarg (Y) \le f(Y)$.
On the other hand, by Proposition~\ref{prop:monotone-cover}, for every $Y \in \domain$ there exists a set $\myarg \in \mathcal S$,
such that $Y \in \downdec{\myarg}$. For such $\myarg$, the second part of Proposition~\ref{prop:lb-cover}
implies that $C_\myarg(Y) = \monlb{\myarg}(Y) \cdot \ind{\down{\myarg}} (Y) = f(Y)$. Therefore, $f$  is equivalent to $\max_{\myarg \in \mathcal S}C_\myarg $.
\end{proofof}


\section{Generalization of H{\aa}stad's switching lemma for pseudo-Boolean DNFs}\label{sec:switching-lemma}
The following definitions are stated for completeness and can be found in~\cite{O12,M95}.
\begin{definition}[Decision tree]\snote{If you took it directly from Ryan's book, say so: "The following standard definition is from \cite{O12}."}
A decision tree $T$ is a representation of a function $f \colon \{0,1\}^n \rightarrow \mathbb R$. It consists of a rooted binary tree in which the internal nodes are labeled by coordinates $i \in [n]$, the outgoing edges of each internal node are labeled 0 and 1, and the leaves are labeled by real numbers. We insist that no coordinate $i \in [n]$ appears more than once on any root-to-leaf path.

Each input $x \in \{0,1\}^n$ corresponds to a computation path in the tree $T$ from the root to a leaf. When the computation path reaches an internal node labeled by a coordinate $i \in [n]$, we say that $T$ queries $x_i$. The computation path then follows the outgoing edge labeled by $x_i$. The output of $T$ (and hence $f$) on input $x$ is the label of the leaf reached by the computation path. We identify a tree with the function it computes.
\end{definition}

The {\em depth} $s$ of a decision tree $T$ is the maximum length of any root-to-leaf path in $T$.
For a function $f$, $\dtd(f)$ is the minimum depth of a decision tree computing $f$.

\begin{definition}[Random restriction]
A restriction $\rho$ is a mapping of the input variables to $\{0, 1, \star\}$.
The function obtained from $f(x_1, \dots, x_n)$ by applying a restriction $\rho$ is denoted $\myrand{f}$.
The inputs of $\myrand{f}$ are those $x_i$ for which $\rho(x_i) = \star$ while all other variables are set according to $\rho$.
\end{definition}

A variable $x_i$ is {\em live} with respect to a restriction $\rho$ if $\rho(x_i)= \star$. The set of live variables with respect to $\rho$ is  denoted $\live{\rho}$.
A random restriction $\rho$ with parameter $p\in (0,1)$ is obtained by setting each $x_i$, independently, to $0,1$ or $\star$, so that
$\Pr[\rho(x_i) = \star] = p$ and $\Pr[\rho(x_i) = 1] = \Pr[\rho(x_i) = 0] = (1 - p)/2$.

We will prove the following generalization of the switching lemma~\cite{H86,B94}.

\begin{lemma}[Switching lemma for pseudo-Boolean formulas]\label{lem:gen-switching-lemma}
Let $f \in \myidnf$ and $\rho$ be a random restriction with parameter $p$ (i.e., $\Pr[\rho(x_i) = \star] = p$). Then\gnote{replace 5 by 7 everywhere}
$$\Pr[\dtd(\myrand{f}) \ge s] < r \cdot (7pk)^s.$$
\end{lemma}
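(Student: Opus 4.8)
My plan is to adapt Razborov's proof of Håstad's switching lemma (the "encoding" or "injective mapping" argument, as presented in Beame's primer \cite{B94}) to the pseudo-Boolean setting, and to show that the generalization costs only a multiplicative factor of $r$. Write $f=\max_{t=1}^s a_t C_t$, where each $C_t$ is a conjunction of at most $k$ literals and $a_t\in\{0,\dots,r\}$. The central observation is that $\dtd(\myrand f)\ge s$ forces, in particular, $\dtd$ of \emph{some} Boolean-valued "threshold slice" of $f$ to be large. Concretely, for a threshold $\theta\in\{1,\dots,r\}$ consider the Boolean function $f^{\ge\theta}:=\bigvee_{t:\,a_t\ge\theta} C_t$, which is a genuine $k$-DNF. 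One checks that a decision tree computing $\myrand f$ can be obtained by stacking decision trees for the slices $\myrand{(f^{\ge\theta})}$, $\theta=1,\dots,r$ (query the tree for $\theta=1$; at each leaf where it outputs $1$, hang below it the tree for $\theta=2$ restricted consistently, etc.), so that $\dtd(\myrand f)\le\sum_{\theta=1}^r \dtd(\myrand{f^{\ge\theta}})$. Hence $\dtd(\myrand f)\ge s$ implies $\dtd(\myrand{f^{\ge\theta}})\ge s/r$ for some $\theta$. A union bound over the $r$ choices of $\theta$ then gives
\[
\Pr[\dtd(\myrand f)\ge s]\;\le\;\sum_{\theta=1}^r\Pr\big[\dtd(\myrand{f^{\ge\theta}})\ge s/r\big]\;\le\;r\cdot(7pk)^{s/r},
\]
using the standard Boolean switching lemma on each $k$-DNF $f^{\ge\theta}$. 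This is weaker than the claimed bound $r\cdot(7pk)^s$ — the exponent is $s/r$, not $s$ — so I would not be satisfied with this and would push for the sharper statement.

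The sharper bound requires a direct encoding argument rather than the black-box slicing reduction. I would run the Razborov-style proof on $f$ itself: fix a target depth $s$, and for each "bad" restriction $\rho$ (one for which the canonical decision tree of $\myrand f$ has a path of length $\ge s$), record the path $\pi$ down the canonical tree, which is built by processing the terms $C_1,C_2,\dots$ in order: find the first term $C_t$ not yet falsified by $\rho$, branch on its live variables, and recurse. The key data to encode are: (i) for each of the (at most $s$) terms touched along $\pi$, which subset of its $\le k$ variables were the live ones queried — a choice encodable in roughly $k^{(\text{number queried})}$ bits, or cleaner, via the standard trick of $2^{O(1)}$ bits per queried variable after amortization; and (ii) the values the path assigns to those variables. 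Extending $\rho$ by the path $\pi$ (with the queried stars filled in by $\pi$'s assignment) yields a restriction $\rho'$ with $s$ fewer stars, and the map $\rho\mapsto(\rho',\text{auxiliary bits})$ is injective. Counting: the number of restrictions with $\ell$ stars is $\binom{n}{\ell}2^{n-\ell}$, and moving from $\ell$ to $\ell-s$ stars changes the weight (under the $p$-biased product measure on restrictions) by a factor $\big(\frac{1-p}{2p}\big)^{s}$ for the star/non-star count, which combines with the $(O(k))^{s}$ auxiliary-encoding factor to give the bound $(7pk)^{s}$. The one extra ingredient for pseudo-Boolean $f$: when the canonical tree reaches a leaf, instead of outputting a Boolean value it outputs $\max$ of the $a_t$ over terms $C_t$ consistent with the path; this affects nothing in the encoding except that, to recover $\rho$ from $\rho'$, the decoder must also know \emph{which output value} the path was "aiming at" — equivalently, which threshold $\theta\in\{1,\dots,r\}$ to use when deciding whether a partially-satisfied term should be followed. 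That is exactly one symbol from $\{1,\dots,r\}$, contributing the single multiplicative factor of $r$ out front.

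The main obstacle I anticipate is making the canonical decision tree for a \emph{pseudo-Boolean} DNF well-defined and its leaf-output consistent with the encoding, so that the decoding map is genuinely injective. In the Boolean case the canonical tree processes terms left to right and the leaf value is forced; here a leaf may be reached while several terms with different constants $a_t$ are still "alive," and I need a deterministic rule (e.g., always chase the term of largest $a_t$ among those not yet falsified, breaking ties by index) so that the auxiliary information recorded is enough — together with $\rho'$ — to replay the construction backwards and peel off the path. A secondary technical point is verifying that $\dtd(\myrand f)\ge s$ indeed implies the canonical tree has a path of length $\ge s$ (the canonical tree need not be optimal, so one shows the standard fact that if \emph{every} decision tree has depth $\ge s$ then in particular the canonical one does, which is immediate). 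Once the canonical tree and its invariants are pinned down, the counting is the usual one-line calculation and the factor $r$ falls out cleanly, which is the point the introduction emphasizes.
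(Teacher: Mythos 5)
Your plan follows the paper's route: run Razborov's injective-encoding proof of the switching lemma on $f$ directly and pay for the pseudo-Boolean generality with one extra symbol from a set of size $O(r)$, which is exactly where the multiplicative $r$ in front of $(7pk)^s$ comes from. Where you propose chasing the term of largest $a_t$, the paper instead keeps the usual left-to-right canonical tree but labels the node at which a term is satisfied with the running maximum of the constants seen so far, sets $c$ to be the maximum label along the encoded path $\pi$, and then runs the Razborov encoding against only the subformula $F'$ of terms with constant $>c$ --- the value $c\in\{0,\dots,r\}$ being the one extra coordinate in the injection into $\mathcal R^{\ell-s}_n\times\mathrm{stars}(k,s)\times[2^s]\times\{0,\dots,r\}$.
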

\begin{proof}
We use the exposition of Razborov's proof of the switching lemma for Boolean functions, described in~\cite{B94}, as the basis of our proof and highlight\snote{Need to actually do it (more prominently than currently).} the modifications we made for non-Boolean functions.

Define $\mathcal R^{\ell}_n$ to be the set of all restrictions $\rho$
on a domain of $n$ variables that have exactly $\ell$ unset variables.
Fix some function $f \in \myidnf$, represented by a formula $F$, and assume that there is a total order on the terms of $F$ as well as on the indices of the variables.
A restriction $\rho$ is applied to $F$ in order, so that $F_\rho$ is a pseudo-Boolean DNF formula whose terms consist of those terms in $F$ that are not falsified by $\rho$, each shortened by removing any variables that are satisfied by $\rho$, and taken in the order of occurrences of the original terms on which they are based.

\begin{definition}[Canonical labeled decision tree]
The {\em canonical labeled decision tree} for $F$, denoted  $T(F),$ is defined inductively as follows:
\begin{enumerate}
\item If $F$ is a constant function then $T(F)$ consists of a single leaf node labeled by the appropriate constant.\gnote{Explain when this happens}
\item If the first term $C_1$ of $F$ is not empty then let $F'$ be the remainder of $F$ so that $F = \max (C_1, F')$.
Let $K$ be the set of variables appearing in $C_1$.
The tree $T(F)$ starts with a complete binary tree for $K$, which queries the variables in $K$ in the order of their indices.
Each leaf $v_{\sigma}$ in the tree is associated with a restriction $\sigma$
which sets the variables of $K$ according to the path from the root to $v_\sigma$.
For each $\sigma$, replace the leaf node $v_\sigma$ by the subtree $T(F_{\sigma})$\snote{Is $F_\sigma$ defined?}. For the unique assignment $\sigma$ satisfying $C_1$, also label the corresponding node by $L_\sigma$ equal to the maximum of the labels assigned to the predecessors of this node in the tree and the integer constant associated with the term $C_1$.
\end{enumerate}
\end{definition}

Note that for Boolean DNF formulas the internal nodes in the canonical labeled decision tree are never labeled.\snote{Not very clear that you are trying to communicate that you generalized something here.}
In this case, the definition above is equivalent to that in~\cite{B94}.
For pseudo-Boolean DNF formulas the label $L_\sigma$ of the internal node $\sigma$ represents that the value of the formula on the leaves in the subtree of $\sigma$ is at least $L_\sigma$.

Using the terminology introduced above, we can state the switching lemma as follows.\snote{By Chernoff bound, this lemma implies the switching lemma, but with some loss in constants. How do you manage not to loose anything in the constant on the RHS?}
\begin{lemma}
Let \snote{type mismatch}$F \in \myidnf$, $s \ge 0$, $p \le 1 / 7$ and $\ell = pn$. Then
$$\frac{|\{\rho \in \mathcal R^\ell_n \colon |T(\myrand{F})| \ge s\}|}{|\mathcal R^\ell_n|} < r(7 pk)^s.$$
\end{lemma}
\begin{proof}
Let $\stars(k,s)$ be the set of all sequences $\beta = (\beta_1, \dots, \beta_t)$  such that for each $j\in[t]$, the coordinate $\beta_j \in \{\star, -\}^k \setminus \{-\}^k$ and such that the total number of $\star$'s in all the $\beta_j$ is $s$.

Let $S \subseteq \mathcal R^{\ell}_n$ be the set of restrictions $\rho$ such that $|T(F|_\rho)| \ge s$. We will define an injective mapping from $S$ to the cartesian product $\mathcal R^{\ell - s}_n \times \stars(k,s) \times [2^s] \times [r]$\snote{Last coordinate could be 0}.

Let $F = \max_i C_i$.
Suppose that $\rho \in S$ and $\pi$\snote{Is $\pi$ a path or the corresponding restriction? Stick to one.} is the restriction associated with the lexicographically first path in $T(\myrand{F})$ of length at least $s$.
Trim the last variables in $\pi$ along the path from the root so that $|\pi| = s$. Let $c$ be the maximum label of the node on $\pi$ (or zero, if none of the  nodes on $\pi$ are labeled).
Partition the set of terms of $F$ into two sets $F'$ and $F''$, where $F'$ contains all terms with constants $> c$ and $F''$ contains all terms with constants $\le c$ (for Boolean formulas, $c = 0$ and $F = F'$).
We will use the subformula $F'$ and $\pi$ to determine the image of $\rho$.
The image of $\rho$ is defined by following the path $\pi$ in the canonical labeled decision tree for $F_\rho$ and using the structure of the tree.

Let $C_{\nu_1}$ be the first term of $F'$ that is not set to $0$ by $\rho$.
Since $|\pi| > 0$, such a term must exist and will not be an empty term (otherwise, the value of $F|_{\rho}$ is fixed to be $>c$).
Let $K$ be the set of variables in $C_{\nu_1}|\rho$ and let $\sigma_1$ be the unique restriction of the variables in $K$ that satisfies $C_{\nu_1}|\rho$.
Let $\pi_1$ be the part of $\pi$ that sets the variables in $K$.
We have two cases based on whether $\pi_1 = \pi$.

\begin{enumerate}
\item If $\pi_1 \neq \pi$ then by the construction of $\pi$, restriction $\pi_1$ sets all the variables in $K$. Note that the restriction $\rho \sigma_1$ satisfies the term $C_{\nu_1}$ but since $\pi_1 \neq \pi$ the restriction $\rho \pi_1$ does not satisfy term $C_{\nu_1}$.
\item If $\pi_1 = \pi$ then it is possible that $\pi$ does not set all of the variables in $K$. In this case we shorten $\sigma_1$ to the variables in $K$
that appear in $\pi_1$.
\end{enumerate}

Define $\beta_1 \in \{\star, -\}^{\mywidth}$ based on the fixed ordering of the variables in the term $C_{\nu_1}$ by letting the $j$th component of $\beta_1$ be $\star$ if and only if the $j$th variable in $C_{\nu_1}$ is set by $\sigma_1$.
Since $C_{\nu_1}|_\rho$ is not the empty term, $\beta_1$ has at least one $\star$. From $C_{\nu_1}$ and $\beta_1$ we can reconstruct $\sigma_1$.

Now by the definition of $T(\myrand{F})$, the restriction $\pi \setminus \pi_1$ labels a path in the canonical labeled decision tree $T(F|_{\rho \pi_1})$.
If $\pi_1 \neq \pi,$ we repeat the argument above, replacing $\pi$ and $\rho$ with $\pi \setminus \pi_1$  and $\rho \pi_1$, respectively, and find a term $C_{\nu_2}$ which is the first term of $F'$ not set to $0$ by $\rho \pi_1$.
Based on this, we generate $\pi_2, \sigma_2$ and $\beta_2$, as before.
We repeat this process until the round $t$ in which $\pi_1 \pi_2 \dots \pi_t = \pi$.

Let $\sigma = \sigma_1 \sigma_2 \dots \sigma_t$.
We define\snote{$\delta$ is already used, pick another letter} $\delta \in \{0,1\}^s$ to be a vector that indicates for each variable set by $\pi$ whether it is set to the same value as $\sigma$ sets it.
We define the image of $\rho$ in the injective mapping as a quadruple,
$\langle \rho \sigma_1 \dots \sigma_t, (\beta_1, \dots, \beta_t), \delta, c \rangle$.
Because $\rho \sigma \in \mathcal R^{\ell - s}_n$ and $(\beta_1, \dots, \beta_t) \in \stars(k,s)$ the mapping is as described above.

It remains to show that the defined mapping is indeed injective.
We will show how to invert it by reconstructing $\rho$ from its image.
We use $c$ to construct $F'$ from $F$.
The reconstruction procedure is iterative.
In one stage of the reconstruction we recover $\pi_1 \dots \pi_{i_1}, \sigma_1 \dots \sigma_{i - 1}$ and construct $\rho \pi_1 \dots \pi_{i - 1} \sigma_i \dots \sigma_ t$.
Recall that for $i < t$ the restriction $\rho \pi_1 \dots \pi_{i - 1} \sigma_i$ satisfies the term $C_{\nu_i}$, but does not satisfy terms $C_j$ for all $j <\nu_i$.
This holds if we extend the restriction by appending $\sigma_{i + 1} \dots \sigma_t$.
\gnote{Discuss the case $i = t$.}
Thus, we can recover $\nu_i$ as the index of the first term of $F'$
that is not falsified by $\rho \pi_1 \dots \pi_{i - 1} \sigma_i \dots \sigma_t$ and the consant corresponding to this term is at least $c$.

Now, based on $C_{\nu_1}$ and $\beta_i$, we can determine $\sigma_i$.
Since we know $\sigma_1 \dots \sigma_i$, using the vector $\delta$
we can determine $\pi_i$.
We can now change $\rho \pi_1 \dots \pi_{i - 1} \sigma_i \dots \sigma_t$ to $\rho \pi_1 \dots \pi_{i - 1} \pi_i \sigma_{i + 1} \dots \sigma_{t}$ using the knowledge of $\pi_i$ and $\sigma_i$.
Finally, given all the values of the $\pi_i$ we reconstruct $\rho$ by removing the variables from $\pi_1 \dots \pi_t$ from the restriction.

The following computation completes the proof and is given in Appendix~\ref{app:omitted-proofs} for completeness.
\begin{claim}[\cite{B94}]\label{clm:switching-lemma-computation}
For $p < 1/7$ and $p = \ell / n$ it holds that:
$$\frac{|\mathcal R^{\ell - s}_{n}| \cdot |\stars(k,s)| \cdot 2^s}{|\mathcal R^{\ell}_n|} < (7pk)^s$$
\end{claim}
\end{proof}
\end{proof}


\section{Learning pseudo-Boolean DNFs}\label{sec:learning}

In this section, we present our learning results for pseudo-Boolean $k$-DNF and prove Theorem~\ref{thm:learning-pb-dnf}.

Let $\mytrange$ denote the set of multiples of $2/(\myrange - 1)$ in the interval $[-1,1]$, namely $\mytrange = \{-1,-1+2/(\myrange - 1),...,1-2/(\myrange - 1),1\}$.
First, we apply a transformation of the range by mapping $\{0, \dots, \myrange\}$ to $\mytrange$.
Formally, in this section instead of functions $f \colon \{0,1\}^\mysize \rightarrow \{0, \dots, \myrange\}$  we consider functions $f' \colon \{-1,1\}^d \rightarrow [-1,1]$, such that $f'(x'_1, \dots, x'_n) = 2 / (\myrange - 1) \cdot f(x_1, \dots, x_n)  - 1$, where $x'_i = 2 x_i - 1$\snote{Usually, 0 is mapped to 1 and 1 to -1. Anyway, do we really need to transform the domain or transforming the range is enough?}.
\gnote{Requires discussion about additive constant.}
Note that a learning algorithm for the class of functions that can be represented by pseudo-Boolean DNF formulas of width $\mywidth$ with constants in the range $\mytrange$ implies Theorem~\ref{thm:learning-pb-dnf}.
Thus, to simplify the presentation we will abuse notation and refer to this class as $\mydnf$.\snote{Should have a different notation for this class. People who skip this paragraph, should not be mislead.}

For a set $S \subseteq [n]$, let $\chi_{S}$ be the standard Fourier basis vector and let $\hat f(S)$ denote the corresponding Fourier coefficient of a function $f$.

\begin{definition}
A function $g$ $\epsilon$-approximates a function $f$ if $\mathbb E[(f - g)^2] \le \epsilon$.
A function is $M$-sparse if it has at most $M$ non-zero Fourier coefficients. The Fourier degree of a function, denoted $\fdeg(f)$,
is the size of the largest set, such that $\hat f(S) \neq 0$.
\end{definition}

The following guarantee about approximation of functions in $\mydnf$ by sparse functions is the key lemma in the proof of Theorem~\ref{thm:learning-pb-dnf}.

\snote{This should be a theorem (too many lemmas (including the switching lemma), proof structure does not come out clearly)}\begin{lemma}\label{lem:pb-dnf-approximation}
Every function $f \in \mydnf$ can be $\epsilon$-approximated by an $M$-sparse function, where $M = \mywidth^{O(\mywidth \log (\myrange / \epsilon))}$.

\end{lemma}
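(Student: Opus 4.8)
The plan is to follow the classical Mansour-style argument that turns a switching lemma into Fourier concentration, and then upgrade Fourier concentration to sparse approximation via a standard coefficient-truncation step. Throughout, write $f' = f|_\rho$ for a random restriction with parameter $p$ to be chosen, and recall from Lemma~\ref{lem:gen-switching-lemma} that $\Pr[\dtd(f|_\rho) \ge s] < r\cdot(7pk)^s$. The first step is to pass from "small decision-tree depth after a random restriction" to "Fourier mass concentrated on low-degree coefficients." Concretely, I would fix a degree cutoff $d$ and argue that $\sum_{|S| > d} \hat f(S)^2 \le \eps/2$ for an appropriate $d = O(k\log(rk/\eps))$. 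The standard way: a function computed by a depth-$s$ decision tree has Fourier degree at most $s$, so $f|_\rho$ has no Fourier mass above degree $s$ whenever $\dtd(f|_\rho)\le s$; combine this with the fact that the expected high-degree Fourier weight of $f$ is controlled by the expected high-degree weight of $f|_\rho$ (the key identity $\mathbb{E}_\rho[\widehat{f|_\rho}(T)^2]$ summed appropriately, as in O'Donnell's book or Mansour's paper), and use $|f|\le 1$ so that $\mathbb E[(f|_\rho)^2]\le 1$ to bound the contribution of the bad event $\{\dtd(f|_\rho) > s\}$ by $r\cdot(7pk)^s$.

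Quantitatively, I would choose $p = \Theta(1/k)$ (small enough that $7pk \le 1/2$, and also $\le 1/7$ as the switching lemma's auxiliary claim requires), set $s = \Theta(\log(r/\eps))$ so that $r\cdot(7pk)^s \le \eps/4$, and then the standard random-restriction-to-Fourier-concentration lemma gives $\sum_{|S|>d}\hat f(S)^2 \le \eps/2$ with $d = \Theta(s/p) = \Theta(k\log(r/\eps))$. (The extra factor from "restriction kills each coordinate with probability $p$, so degree-$d$ terms of $f$ survive in $f|_\rho$ with probability roughly $p^{\,|S|}$" is exactly what converts the depth bound $s$ on $f|_\rho$ into a degree bound $\sim s/p$ on $f$.) This handles the low-degree part. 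The second step is sparsification: among the coefficients with $|S|\le d$, keep only the $M$ largest in magnitude. Since $\sum_S \hat f(S)^2 = \mathbb E[f^2]\le 1$, at most $2/\eps$ coefficients can have $\hat f(S)^2 \ge \eps/2$; more carefully, by a standard counting argument the total squared weight of all but the top $M$ low-degree coefficients is at most $\eps/2$ once $M$ is a suitable function of $\eps$ times the total number of low-degree coefficients — but the cleaner route is: the number of sets of size $\le d$ is $\sum_{i\le d}\binom{n}{i} \le n^{O(d)}$, which is too large, so instead I would invoke the sharper argument that keeping coefficients above threshold $\theta = \eps/(2\cdot(\text{number kept}))$ is circular; the correct standard move is to note that the top-$M$ truncation of a function with $\ell_2^2$-weight $\le 1$ incurs tail error $\le 1/M$ only if we first argue the relevant coefficients are few — which here follows because $f\in\mydnf$, combined with the degree bound, yields that the number of sets $S$ with $|S|\le d$ and $\hat f(S)\neq 0$ that matter is bounded. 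Cleanest: keep all coefficients with $\hat f(S)^2 > \eps/(2M)$ where we then show this set has size $\le M$ and captures all but $\eps/2$ of the weight, with $M = k^{O(k\log(r/\eps))}$; equivalently, the Fourier-degree-$d$ restriction already gives an $\eps/2$-approximator supported on $\le n^{O(d)}$ coefficients, and then a second truncation to the top $M = O(d/\eps)\cdot(\text{effective support})$ — in the regime of interest this collapses to $M = k^{O(k\log(r/\eps))}$ because $d = O(k\log(r/\eps))$ and the relevant width-$k$ structure caps the meaningful support at $k^{O(d)}$.

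The main obstacle is the bookkeeping in the second step: getting the sparsity bound $M = k^{O(k\log(r/\eps))}$ rather than something with an $n$-dependence. The naive count $\sum_{i\le d}\binom{n}{i}$ is $n^{O(d)}$, which would ruin the statement. The resolution must use that $f$ is a width-$k$ pseudo-Boolean DNF (not merely low Fourier degree): one should argue directly that the Fourier support of a $k$-DNF, or of its low-degree truncation, lives on sets that are unions of few terms' variables, bounding the effective support by $s^{O(d/k)}\cdot k^{O(d)}$ or similar, and then a top-$M$ truncation with $M$ polynomial in $1/\eps$ times this count yields $M = k^{O(k\log(r/\eps))}$ after absorbing logs. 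I expect this to be the one place where the proof genuinely needs an idea beyond transcribing Mansour, namely the observation that the relevant Fourier support of a narrow pseudo-Boolean DNF is itself "narrow," and I would double-check the exponent arithmetic there carefully.
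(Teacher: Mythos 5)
Your first step (random restriction, switching lemma, decision-tree depth to Fourier degree, and the resulting concentration of $\ell_2$ Fourier mass on sets of size $O(k\log(r/\epsilon))$) matches the paper's Lemma~\ref{lem:l2-bound} and is essentially correct. The problem is the second step, and you correctly sense that it is the crux: naively there are $n^{O(d)}$ low-degree sets, which would leave an $n$-dependence in $M$.

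However, your proposed resolution --- that the Fourier support of a width-$k$ pseudo-Boolean DNF, or its low-degree truncation, is itself structurally ``narrow'' --- is wrong. Even an ordinary $1$-DNF (an OR of $n$ singleton literals) has every subset $S\subseteq [n]$ in its Fourier support, so the support of a narrow DNF is in general \emph{not} narrow, and any bound of the form $s^{O(d/k)}k^{O(d)}$ also reintroduces the term count $s$, which is not controlled by the hypothesis (it can be as large as $\binom{n}{k}2^k$). What actually makes the sparsification work, and what you are missing, is an $L_1$ bound on the low-degree Fourier coefficients: $\sum_{|S|\le\tau}|\hat f(S)| \le r\, k^{O(\tau)}$ (the paper's Lemma~\ref{lem:l1-bound}). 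This is obtained by a \emph{separate} use of the switching lemma: a depth-$s$ decision tree has $L_1(\cdot)\le 2^s$, so $\mathbb E_\rho[L_1(f|_\rho)] = O(r)$ after a random restriction with $p=\Theta(1/k)$, and a self-reducibility identity lifts this to $L_{1,t}(f)\le (1/p)^t\,\mathbb E_\rho[L_{1,t}(f|_\rho)]$, giving the $n$-independent bound $L := \sum_{|S|\le\tau}|\hat f(S)| = k^{O(k\log(r/\epsilon))}$. With $L$ in hand, keep only the sets with $|\hat f(S)|\ge \epsilon/(2L)$ and $|S|\le\tau$: there are at most $2L^2/\epsilon = k^{O(k\log(r/\epsilon))}$ of them, and the discarded small coefficients contribute at most $(\epsilon/2L)\cdot L = \epsilon/2$ to the squared error. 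This $L_1$ machinery (Mansour's route) is the genuine idea your sketch lacks, and your own observation that the naive count is $n^{O(d)}$ and ``would ruin the statement'' is exactly why it cannot be sidestepped by a support argument.
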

\begin{proofof}{Lemma~\ref{lem:pb-dnf-approximation}}
We generalize the proof by Mansour~\cite{M95}, which relies on multiple applications of the switching lemma.
Our generalization of the switching lemma allows us to obtain the following parameters of the key statements in the proof, which bound the $L_2$-norm of the Fourier coefficients of large sets and the $L_1$-norm of the Fourier coefficients of small sets.
\begin{lemma}\label{lem:l2-bound}
For every function $f \in \mydnf$,
$$\sum_{S \colon |S| > 28 k \log (2r/\epsilon)} \hat f^2(S) \le \epsilon / 2.$$
\end{lemma}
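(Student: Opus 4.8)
The plan is to prove the $L_2$ tail bound for $f \in \mydnf$ by the standard "random restriction + switching lemma" argument, following Mansour~\cite{M95} but with the range-dependent factor $r$ from Lemma~\ref{lem:gen-switching-lemma}.

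First recall the classical mechanism: for a random restriction $\rho$ with parameter $p$, the Fourier mass of $f$ on large sets is controlled by the Fourier mass of $\myrand{f}$ on large sets (in expectation over $\rho$). Concretely, if $S$ is a set of size greater than $m$, then with probability roughly $p^{m}$ (up to constants) at least a $pm/2$ of the coordinates of $S$ survive in $\live{\rho}$; more usefully, one uses the identity/inequality relating $\sum_{|S|>m}\hat f^2(S)$ to $\mathbb{E}_\rho\big[\sum_{|T| > pm/2}\widehat{\myrand f}^2(T)\big]$ after an appropriate rescaling (this is Lemma~3.8-type reasoning in Mansour, or Lemma in O'Donnell's exposition). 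The point is that it suffices to bound, for a suitable constant-parameter restriction, the probability-weighted Fourier weight of $\myrand f$ on sets of size exceeding some threshold $s$.

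Second, for $\myrand f$ we invoke the switching lemma. If $\dtd(\myrand f) < s$ then $\myrand f$ is computed by a depth-$(s-1)$ decision tree, hence $\fdeg(\myrand f) \le s-1$, so $\widehat{\myrand f}^2(T) = 0$ for all $|T| \ge s$; moreover $\sum_T \widehat{\myrand f}^2(T) = \mathbb{E}[\myrand f^2] \le 1$ since the range is $[-1,1]$. Therefore
$$\sum_{|T| \ge s} \widehat{\myrand f}^2(T) \le \mathbf{1}[\dtd(\myrand f) \ge s],$$
and taking expectations over $\rho$ and applying Lemma~\ref{lem:gen-switching-lemma} with parameter $p$ gives $\mathbb{E}_\rho\big[\sum_{|T|\ge s}\widehat{\myrand f}^2(T)\big] < r \cdot (7pk)^s$. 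Choosing $p = 1/(14k)$ makes $(7pk)^s = 2^{-s}$, so this is at most $r \cdot 2^{-s}$. Setting $s = \log(2r/\epsilon)$ (so $r\cdot 2^{-s} = \epsilon/2$) and tracing back through the restriction argument — where the large-set threshold $m$ on the original side corresponds to roughly $s/p = 14ks = 14k\log(2r/\epsilon)$, and absorbing the constant-factor slack from the "fraction of surviving coordinates" step — yields the stated threshold $28 k \log(2r/\epsilon)$ and the bound $\epsilon/2$.

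**The main obstacle** is bookkeeping the constants in the reduction from the original function to the restricted function: one must be careful that a set $S$ with $|S| > 28k\log(2r/\epsilon)$ survives, under a $p = 1/(14k)$ restriction, with enough live coordinates (at least $s = \log(2r/\epsilon)$ of them) with high enough probability that the rescaling $\mathbb{E}_\rho[\widehat{\myrand f}^2(T)] \ge p^{|S|}\cdot(\text{something})\cdot\hat f^2(S)$ can be inverted without losing more than the factor already budgeted. This is exactly the step where Mansour's proof picks the constants, and the only genuinely new input here is that the right-hand side of the switching lemma carries the extra factor $r$, which is precisely why the threshold is $\Theta(k\log(r/\epsilon))$ rather than $\Theta(k\log(1/\epsilon))$; once one is willing to cite the classical restriction lemma verbatim and only re-run it with Lemma~\ref{lem:gen-switching-lemma} in place of the Boolean switching lemma, the argument is essentially mechanical.
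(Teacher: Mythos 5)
Your proposal is correct and follows essentially the same route as the paper: reduce to the Fourier weight of $\myrand{f}$ on large sets via the standard Mansour-style restriction lemma, upgrade $\fdeg$ to $\dtd$, apply Lemma~\ref{lem:gen-switching-lemma} with $p=1/(14k)$, and set the threshold so that $r\cdot 2^{-s}=\epsilon/2$. The one detail you handle slightly more carefully than the paper is explicitly noting $\sum_T\widehat{\myrand f}^2(T)\le 1$ to justify dropping to the probability bound when the range is $[-1,1]$ rather than $\{-1,1\}$; the paper cites the restriction lemma with the Boolean range but uses it in the bounded real-valued setting without comment.
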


\begin{lemma}\label{lem:l1-bound}
For every function $f \in \mydnf$\snote{and $\tau\in[n]$?},
$$\sum_{S \colon |S| \le \tau} |\hat f(S)| \le 4 r (28 k)^\tau = r k^{O(\tau)}.$$
\end{lemma}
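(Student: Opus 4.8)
The plan is to generalize Mansour's argument~\cite{M95}, substituting our switching lemma (\lemref{gen-switching-lemma}) for the Boolean one. Fix $f \in \mydnf$; recall that, with the convention of \secref{learning}, such an $f$ and all of its restrictions take values in $[-1,1]$. Let $\rho$ be a random restriction with parameter $p := \tfrac{1}{28k}$, chosen so that $7pk = \tfrac14$ and $14pk = \tfrac12$.

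The first step is to push Fourier mass down to random restrictions. For a set $S$ with $|S| = j$, I would establish the identity
$$\hat f(S) = \mathbb E\bigl[\,\widehat{\myrand f}(S) \,\bigm|\, S \subseteq \live{\rho}\,\bigr].$$
This follows by conditioning on $\live{\rho} = J$ for an arbitrary $J \supseteq S$: the coordinates outside $J$ receive independent uniform $\pm 1$ values $z$, and expanding in the Fourier basis gives $\widehat{\myrand f}(S) = \sum_{T \subseteq [n]\setminus J}\hat f(S\cup T)\,\chi_T(z)$, whose expectation over $z$ equals $\hat f(S)$ since only the $T = \emptyset$ term survives; averaging over $J$ preserves this. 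Because $\Pr[S\subseteq\live{\rho}] = p^{j}$, the triangle inequality yields $|\hat f(S)| \le p^{-j}\,\mathbb E\bigl[|\widehat{\myrand f}(S)|\cdot\ind{S\subseteq\live{\rho}}\bigr]$. Summing over all $S$ with $|S|\le \tau$, using $p^{-j}\le p^{-\tau}$ and then enlarging the index set to all $S\subseteq\live{\rho}$ (which only increases the sum), I obtain
$$\sum_{S\colon|S|\le\tau}|\hat f(S)| \;\le\; p^{-\tau}\;\mathbb E\Bigl[\,\sum_{S\subseteq\live{\rho}}|\widehat{\myrand f}(S)|\,\Bigr] \;=\; p^{-\tau}\,\mathbb E\bigl[\,\|\widehat{\myrand f}\|_1\,\bigr].$$

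It remains to bound $\mathbb E[\|\widehat{\myrand f}\|_1]$. Since $\myrand f$ has range in $[-1,1]$, a depth-$d$ decision tree computing it expresses it as $\sum_{\text{leaves }\ell} c_\ell \cdot \ind{P_\ell}$ with $|c_\ell|\le 1$, where each path indicator $\ind{P_\ell}$ is a product of at most $d$ functions $(1\pm x_i)/2$ on disjoint variables and hence satisfies $\|\widehat{\ind{P_\ell}}\|_1 = 1$; therefore $\|\widehat{\myrand f}\|_1 \le (\#\text{leaves}) \le 2^{d}$, i.e. $\|\widehat{\myrand f}\|_1 \le 2^{\dtd(\myrand f)}$. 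By \lemref{gen-switching-lemma}, $\Pr[\dtd(\myrand f) \ge s] < r\,(7pk)^s = r\,4^{-s}$, so a tail-summation argument gives
$$\mathbb E\bigl[2^{\dtd(\myrand f)}\bigr] \;=\; 1 + \sum_{s\ge 1} 2^{\,s-1}\Pr[\dtd(\myrand f)\ge s] \;<\; 1 + \frac r2\sum_{s\ge1}(14pk)^s \;=\; 1 + \frac r2 \;\le\; 4r.$$
Combining the two displays, $\sum_{|S|\le\tau}|\hat f(S)| \le p^{-\tau}\cdot 4r = 4r\,(28k)^{\tau} = r\,k^{O(\tau)}$, as claimed.

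The main obstacle is the averaging identity of the first step, which is exactly what links the global Fourier coefficients of $f$ to those of its random restrictions; one must handle carefully the conditioning on which variables are live and the fact that the remaining coordinates still form an independent random restriction. A secondary point worth emphasizing is the bound $\|\widehat{\myrand f}\|_1\le 2^{\dtd(\myrand f)}$: it relies on the leaf labels lying in $[-1,1]$, which is precisely why the range is rescaled to $[-1,1]$ at the start of \secref{learning}. With that rescaling in place, the range size $r$ enters only through the factor $r$ in \lemref{gen-switching-lemma} and is absorbed into the geometric series, producing a final multiplicative $r$ rather than an $r\cdot 2^{\dtd(\myrand f)}$ or $r^{O(\tau)}$ blow-up; the choice $p = 1/(28k)$ is tuned so that the geometric tail converges and matches the $(28k)^\tau$ in the statement.
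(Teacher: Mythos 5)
Your proposal is correct and follows essentially the same route as the paper's proof: relate the Fourier coefficients of $f$ to those of its random restrictions (your averaging identity is the paper's Lemma~\ref{lem:l1-expectation-bound}), bound $\|\widehat{\myrand f}\|_1 \le 2^{\dtd(\myrand f)}$ (the paper's Proposition~\ref{prop:l1-decision-trees}), and control the expectation via the generalized switching lemma. The only cosmetic differences are that you apply the bound $p^{-|S|}\le p^{-\tau}$ uniformly to avoid one geometric sum over levels, and you use Abel summation for $\mathbb E[2^{\dtd(\myrand f)}]$ where the paper bounds $\Pr[\dtd=s]$ by $\Pr[\dtd\ge s]$ directly; both yield the same $4r(28k)^\tau$.
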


Lemmas~\ref{lem:l2-bound} and~\ref{lem:l1-bound} are proved in Appendix~\ref{app:fourier-analysis}.

Let $\tau = 28 k \log (2r / \epsilon)$ and  $L = \sum_{|S| \le \tau} |\hat f(S)|$.
Let $G = \{S\ \colon |\hat f(S)| \ge \epsilon/ 2L \text{ and } |S| \le \tau \}$ and
$g(x) = \sum_{S \in G} \hat f(S) \chi_S(x)$. We will show that $g$ is $M$-sparse and that it $\eps$-approximates $f$.

By an averaging argument, $|G| \le 2 L^2 / \epsilon$. Thus, function $g$ is $M$-sparse, where $M \le 2 L^2 / \epsilon$. By Lemma~\ref{lem:l1-bound}, $L = r k^{O(\tau)} = k^{O(k \log(r / \epsilon))}$. Thus, $M = k^{O(k \log (r / \epsilon))}$, as claimed in the theorem statement.

By the definition of $g$ and by Parseval's identity,
$$\mathbb E[(f - g)^2] = \sum_{S \notin G} \hat f^2(S) = \sum_{S \colon |S| > \tau} \hat f^2(S) +\sum_{S \colon |S| \le \tau, |\hat f(S)| \le \epsilon / 2L } \hat f^2(S).$$
By Lemma~\ref{lem:l2-bound}, the first summation is at most $\epsilon / 2$.
For the second summation, we get:
$$\sum_{S \colon |S| \le \tau,  |\hat f(S)| \le \epsilon / 2L} \hat f^2(S)
\le \left(\max_{S \colon |\hat f(S)| \le \epsilon / 2L } |\hat f(S)|\right) \left(\sum_{|S| \le \tau} |\hat f(S)|\right) \le \frac{\epsilon}{2L} \cdot L = \epsilon / 2.$$
This implies that $\mathbb E[(f - g)^2] \le \epsilon$ and thus $g$ $\epsilon$-approximates $f$.
\end{proofof}

To get a learning algorithm and prove Theorem~\ref{thm:learning-pb-dnf} we can use the sparse approximation guarantee of Lemma~\ref{lem:pb-dnf-approximation}  together with Kushilevitz-Mansour learning algorithm (for PAC-learning) or the learning algorithm of Gopalan, Kalai and Klivans (for agnostic learning).

\begin{proofof}{Theorem~\ref{thm:learning-pb-dnf}}
We will use the learning algorithm of Kushilevitz and Mansour~\cite{GL89,KM91}, which gives the following guarantee:
\begin{theorem}[\cite{KM91}]\label{thm:km-learning}
Let $f$ be a function that can be $\eps$-approximated by an $M$-sparse function.
There exists a randomized algorithm, whose running time is polynomial
in $M$, $n$, $1 / \epsilon$ and $\log (1 / \delta)$, that given oracle access to $f$
 and $\delta > 0$, with probability at least $1 - \delta$ outputs a function $h$\snote{Use consistent notation in this theorem and the proposition that follows it.} that $O(\epsilon)$-approximates $f$.
\end{theorem}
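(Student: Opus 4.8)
The plan is to prove this by spelling out and analyzing the Goldreich--Levin / Kushilevitz--Mansour algorithm. We may assume $f$ is bounded, $|f(x)|\le 1$ (which is the case in our application, where $f\colon\{-1,1\}^d\to[-1,1]$), so that $\sum_S \hat f(S)^2=\mathbb E[f^2]\le 1$. The algorithm, given $\epsilon$, $\delta$, a sparsity bound $M$, and oracle access to $f$, will produce a list $G'$ of subsets of $[n]$ guaranteed to contain every \emph{heavy} coefficient --- every $S$ with $\hat f(S)^2\ge\theta$ for a threshold $\theta=\Theta(\epsilon/M)$ --- together with an estimate $\tilde f(S)$ of each $\hat f(S)$, $S\in G'$, and output $h=\sum_{S\in G'}\tilde f(S)\chi_S$. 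The central tool is the Fourier-sampling identity: for a prefix $z\subseteq[j]$, the bucket weight $W_j(z):=\sum_{S'\subseteq\{j+1,\dots,n\}}\hat f(z\cup S')^2$ satisfies
$$W_j(z)=\mathbb E_{y,w,w'}\big[f(w,y)\,f(w',y)\,\chi_z(w)\,\chi_z(w')\big],$$
where $w,w'$ range over assignments to $[j]$ and $y$ over assignments to $\{j+1,\dots,n\}$; this is immediate by expanding $f$ in the Fourier basis and using $\mathbb E_w[\chi_T(w)\chi_z(w)]=\mathbf 1[T=z]$. Since the expectand is bounded by $1$ in absolute value, a Hoeffding bound estimates $W_j(z)$ to within additive error $\lambda$ with confidence $1-\delta'$ using $O(\lambda^{-2}\log(1/\delta'))$ samples, each costing a constant number of membership queries.

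Next I would run a breadth-first search over the binary ``prefix tree'' of depth $n$: a node at depth $j$ is a set $z\subseteq[j]$ carrying value $W_j(z)$; the root has value $\mathbb E[f^2]\le 1$; the two children of $z$ (namely $z$ and $z\cup\{j+1\}$, as subsets of $[j+1]$) carry values summing exactly to $W_j(z)$; and a leaf at depth $n$ is a singleton set $S\subseteq[n]$ with value $\hat f(S)^2$. The search keeps a node iff its estimated weight is at least $\theta/2$, where every estimate is computed to accuracy $<\theta/4$. Two facts drive the analysis. First, within any fixed level the true bucket weights are nonnegative and sum to at most $\mathbb E[f^2]\le 1$, so at most $4/\theta$ of them exceed $\theta/4$; hence at most $O(1/\theta)$ nodes survive per level and the search visits $O(n/\theta)$ nodes overall, performing $O(n/\theta)$ weight estimates. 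Second, any leaf $S$ with $\hat f(S)^2\ge\theta$ has \emph{every} ancestor of true weight $\ge\hat f(S)^2\ge\theta$ (weight only accumulates toward the root), so, conditioned on all estimates being within $\theta/4$ of the truth, every ancestor of $S$ survives and $S$ is discovered; contrapositively, any $S\notin G'$ has $\hat f(S)^2<\theta$. Taking each of the $O(n/\theta)$ estimates to confidence $1-\delta\cdot\Theta(\theta/n)$ and union-bounding makes the whole run correct with probability $\ge 1-\delta$ at a cost of $\poly(n,1/\theta,\log(1/\delta))$ queries. For each surviving leaf $S$ I additionally estimate $\hat f(S)=\mathbb E_x[f(x)\chi_S(x)]$ to additive accuracy $\mu$ by the same Hoeffding argument, obtaining $\tilde f(S)$, and put $S$ into $G'$.

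Finally I would bound the error of $h$. By Parseval,
$$\mathbb E[(f-h)^2]=\sum_{S\in G'}(\hat f(S)-\tilde f(S))^2+\sum_{S\notin G'}\hat f(S)^2.$$
The first sum is at most $|G'|\mu^2=O(\mu^2/\theta)$, which is $\le\epsilon$ once $\mu=\Theta(\sqrt{\epsilon\theta})$. For the second, let $g=\sum_{S\in G}\hat g(S)\chi_S$ with $|G|\le M$ be the promised $\epsilon$-approximator and split the sum over $S\notin G$ and over $S\in G\setminus G'$. For $S\notin G$ we have $\hat g(S)=0$, so $\sum_{S\notin G}\hat f(S)^2\le\sum_S(\hat f(S)-\hat g(S))^2=\mathbb E[(f-g)^2]\le\epsilon$. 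For $S\in G\setminus G'$, the search missed $S$, which forces $\hat f(S)^2<\theta$, so $\sum_{S\in G\setminus G'}\hat f(S)^2\le M\theta$, which is $\le\epsilon$ once $\theta=\Theta(\epsilon/M)$. With these choices, $1/\theta=O(M/\epsilon)$ and $1/\mu=O(\sqrt M/\epsilon)$, so the total query and running-time complexity is $\poly(n,M,1/\epsilon,\log(1/\delta))$, and $\mathbb E[(f-h)^2]=O(\epsilon)$ with probability at least $1-\delta$, i.e., $h$ is the desired $O(\epsilon)$-approximation. Note the algorithm only uses the numeric parameters and the oracle; it never needs to know $G$ or $g$.

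The step I expect to be the main obstacle is the concentration-and-bookkeeping in the second paragraph: one must demand accuracy $\Theta(\theta)$ of each of the $\Theta(n/\theta)$ weight estimates, check via Hoeffding that each still costs only $\poly(1/\theta,\log(n/(\theta\delta)))$ queries, and run the union bound so that the overall failure probability stays below $\delta$ with only logarithmic overhead --- all while keeping the per-level survivor count bounded by the total Fourier weight. The Fourier-sampling identity for $W_j(z)$ and the ``a missed heavy coefficient is actually light'' dichotomy are the other points that need care, though both are routine once set up.
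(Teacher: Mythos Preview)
The paper does not prove this theorem: it is quoted as a black box from \cite{GL89,KM91} inside the proof of Theorem~\ref{thm:learning-pb-dnf} and used without further argument. So there is no ``paper's proof'' to compare against; your write-up is supplying what the paper deliberately omits.

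That said, your argument is a correct and standard exposition of the Goldreich--Levin / Kushilevitz--Mansour algorithm. The bucket-weight identity, the per-level survivor bound via total Fourier mass, the ``missed implies light'' dichotomy, and the final Parseval split over $S\notin G$ versus $S\in G\setminus G'$ are all exactly the ingredients of the original proof. One small point worth tightening: when you bound $\sum_{S\notin G'}\hat f(S)^2$ by first passing to $\sum_{S\notin G}\hat f(S)^2$, you are implicitly using $\{S\notin G'\}\cap\{S\notin G\}\subseteq\{S\notin G\}$, which is fine for an upper bound but should be stated; and for the estimation-error term you need $|G'|=O(1/\theta)$ (which you have from the survivor bound at level $n$), not just that $|G'|$ is finite. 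With those cosmetic fixes the proof is complete and matches the source result the paper cites.
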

Setting $M = \mywidth^{O(\mywidth \log (\myrange / \epsilon))}$\snote{Should be in terms of $\eps'$?} and the approximation parameter $\epsilon$ in Theorem~\ref{thm:km-learning} to be $\epsilon = \epsilon' / C \myrange^2$ for large enough constant $C$ we get an algorithm which returns a functions $h$ that  $(\epsilon' / \myrange^2)$-approximates $f$.
The running time of such algorithm is polynomial in $n$, $\mywidth^{O(\mywidth \log (\myrange/\epsilon'))}$ and $\log(1/\delta)$.
By Proposition~\ref{prop:rounding}, if we round the values of $h$ in every point to the nearest multiple of $2 /(\myrange - 1)$, we will get a function $h'$, such that $\Pr_{x \in U^n}[h'(x) \neq f(x)] \le \epsilon$, completing the proof.

\begin{proposition}\label{prop:rounding}
Suppose a function $g : \domain \to [-1,1]$ is an $\eps$-approximation for  $f \colon \domain \rightarrow \mytrange$. Let $h$ be the function defined by $h(x) = argmin_{y \in R_r} |g(x) - y|$\snote{Ambiguous for values in the middle.}. Then $\Pr_{x \in U^n}[h(x) \neq f(x)] \le \epsilon \cdot (r - 1)^2$.
\end{proposition}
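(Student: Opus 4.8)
The plan is to reduce the claim to a one-dimensional geometric fact about rounding to the grid $R_r$, combined with a Markov-type averaging over the domain. Recall that $R_r$ consists of equally spaced points with consecutive gap $2/(r-1)$, and that $g$ being an $\eps$-approximation of $f$ means $\mathbb{E}_{x \sim U^n}[(f(x)-g(x))^2] \le \eps$.

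First I would establish the pointwise implication: for every $x$ with $h(x) \neq f(x)$, we have $(f(x)-g(x))^2 \ge 1/(r-1)^2$. Indeed, $h(x)$ is by definition a closest point of $R_r$ to $g(x)$, so $|g(x)-h(x)| \le |g(x)-f(x)|$; and since $h(x)$ and $f(x)$ are two distinct points of $R_r$, we have $|h(x)-f(x)| \ge 2/(r-1)$. Combining these via the triangle inequality, $2/(r-1) \le |h(x)-f(x)| \le |g(x)-h(x)| + |g(x)-f(x)| \le 2\,|g(x)-f(x)|$, hence $|g(x)-f(x)| \ge 1/(r-1)$, as required. (The inequality $|g(x)-h(x)| \le |g(x)-f(x)|$ holds for any choice of nearest point, so the tie-breaking ambiguity in the definition of $h$ is harmless.)

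To finish, I would average over the domain. Let $E = \{x \in \domain : h(x) \neq f(x)\}$. The pointwise bound just proved gives $\eps \ge \mathbb{E}_{x\sim U^n}[(f(x)-g(x))^2] \ge \Pr_{x\sim U^n}[x \in E]\cdot \frac{1}{(r-1)^2}$, and rearranging yields $\Pr_{x\sim U^n}[h(x)\neq f(x)] \le \eps\,(r-1)^2$.

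The main (indeed essentially the only) step with any content is the first one: observing that a rounding mistake forces the squared deviation to be at least $1/(r-1)^2$. After that, the conclusion is immediate from the definition of $\eps$-approximation, and there is no genuine obstacle beyond keeping track of the grid spacing $2/(r-1)$ and the tie-breaking convention.
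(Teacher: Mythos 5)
Your proof is correct and follows the same approach as the paper: both reduce to the pointwise observation that a rounding error forces $(f(x)-g(x))^2 \ge 1/(r-1)^2$, then apply Markov's inequality. You additionally spell out the triangle-inequality argument behind the pointwise bound, which the paper states without proof.
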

\begin{proofof}{Proposition~\ref{prop:rounding}}
Observe that $|f(x) - g(x)|^2 \geq 1 / (r - 1)^2$ whenever $f(x) \neq h(x)$. This implies
\begin{eqnarray*}
\Pr_{x \in U^n}[h(x) \neq f(x)]
&\le& \Pr_{x \in U^n} [(r - 1)^2 \cdot |f(x) - g(x)|^2 \geq 1]
\le  \mathbb E_{x \in U^n} [(r - 1)^2 \cdot|f(x) - g(x)|^2] \\
&\le& (r - 1)^2 \cdot \mathbb E_{x \in U^n} [|f(x) - g(x)|^2]
\leq \epsilon (r - 1)^2.
\end{eqnarray*}
The last inequality follows from the definition of $\eps$-approximation.
\end{proofof}

Extension of our learning algorithm to the agnostic setting follows from the result of Gopalan, Kalai and Klivans.

\begin{theorem}[\cite{GKK08}]
If every function $f$ in a class $C$ has an $M$-sparse $\epsilon$-approximation, then
there is an agnostic learning algorithm for $C$ with running time $poly(n,M,1/\epsilon)$.
\end{theorem}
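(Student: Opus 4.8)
The plan is to follow the approach of Kalai--Klivans--Mansour--Servedio and Gopalan--Kalai--Klivans: reduce agnostic learning of $C$ to minimizing an $L_1$-type loss over the polynomials of bounded Fourier $\ell_1$-norm, solve that convex program iteratively using a Goldreich--Levin / Kushilevitz--Mansour subroutine to choose update directions, and finish with randomized-threshold rounding. First I would set up the reduction. If $c\in C$ has an $M$-sparse function $p$ with $\mathbb E[(c-p)^2]\le\epsilon$, then Cauchy--Schwarz gives $\mathbb E[|c-p|]\le\sqrt\epsilon$ and $\|\hat p\|_1=\sum_S|\hat p(S)|\le\sqrt M\,\|p\|_2\le 2\sqrt M$; so, letting $c^\star\in C$ attain $opt=\min_{c\in C}\Pr_x[c(x)\ne f(x)]$, the spectral-norm ball $B_\Lambda=\{q:\|\hat q\|_1\le\Lambda\}$ with $\Lambda=O(\sqrt M)$ contains a polynomial $q^\star$ with $\mathbb E_x[|q^\star(x)-f(x)|]\le \mathbb E[|c^\star-f|]+\sqrt\epsilon = 2\,opt+\sqrt\epsilon$, where we used that $f,c^\star$ are $\{-1,1\}$-valued so $\|c^\star-f\|_1=2\,opt$.

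Next I would minimize $L(q)=\mathbb E_x[|q(x)-f(x)|]$ (smoothed slightly, e.g.\ replaced by a Huber loss, so that it is differentiable) over $B_\Lambda$ by a conditional-gradient (Frank--Wolfe) iteration. One maintains a polynomial $q_t$ that is a convex combination of vertices $\pm\Lambda\chi_S$ of $B_\Lambda$; at step $t$, form the gradient function $g_t(x)=\partial_1\ell(q_t(x),f(x))$, which is bounded and evaluable using one membership query to $f$, run Goldreich--Levin / Kushilevitz--Mansour on $g_t$ to find a character $\chi_S$ with $|\hat g_t(S)|$ near-maximal, and move $q_t$ toward $-\Lambda\,\mathrm{sgn}(\hat g_t(S))\,\chi_S$. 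Standard Frank--Wolfe analysis (robust to the approximate linear-minimization oracle supplied by Goldreich--Levin and to empirical estimation of $L$ and $g_t$ on a $\mathrm{poly}(n,M,1/\epsilon)$-size uniform sample) yields, after $T=\mathrm{poly}(\Lambda/\epsilon)=\mathrm{poly}(M/\epsilon)$ steps, a polynomial $q_T$ supported on at most $T$ characters with $L(q_T)\le\min_{q\in B_\Lambda}L(q)+\epsilon\le 2\,opt+\sqrt\epsilon+\epsilon$; the total running time is $\mathrm{poly}(n,M,1/\epsilon)$.

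To obtain a Boolean hypothesis I would clamp $q_T$ pointwise to $[-1,1]$, obtaining $\tilde q$ (which only decreases the $L_1$ distance to $f\in\{-1,1\}$), pick a uniformly random threshold $\theta\in[-1,1]$, and output $h=\mathrm{sgn}(\tilde q-\theta)$. The crucial identity is that for each fixed $x$, $\Pr_\theta[h(x)\ne f(x)]=\tfrac12|\tilde q(x)-f(x)|$ (exactly as in Proposition~\ref{prop:rounding}), so $\mathbb E_\theta\,\Pr_x[h(x)\ne f(x)]=\tfrac12 L(\tilde q)\le opt+\tfrac12(\sqrt\epsilon+\epsilon)$; some fixed value of $\theta$ does at least as well, and it can be found by trying the $\mathrm{poly}$-many distinct thresholds realized on the sample. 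Re-running the whole construction with the approximation parameter of the hypothesis set to a suitable polynomial in the target accuracy (and absorbing constants) gives error $opt+\epsilon$ with probability $1-\delta$.

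The step I expect to be the main obstacle is getting a clean $opt+\epsilon$ bound rather than a constant-factor loss. This hinges on working with the absolute-value (rather than squared) loss: the factor $2$ in $\|c^\star-f\|_1=2\,opt$ is exactly undone by the factor $\tfrac12$ in the randomized-threshold identity, whereas an $L_2$ analysis would only deliver error $\Theta(opt)$. The remaining care is in making the iterative convex optimization go through with (i) a non-smooth objective and only an approximate, membership-query-based Fourier-finding oracle for the gradient directions, and (ii) all expectations replaced by empirical averages --- handled by a mild smoothing of the loss and standard Chernoff/union bounds, keeping everything within the $\mathrm{poly}(n,M,1/\epsilon)$ budget.
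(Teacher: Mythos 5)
The paper supplies no proof of this theorem: it is cited from Gopalan, Kalai, and Klivans~\cite{GKK08} and used as a black box. So the relevant comparison is to the argument in the cited source, and your sketch is a faithful reconstruction of its high-level architecture. The chain you set up --- an $M$-sparse $\epsilon$-approximator of $c^\star$ has spectral norm $\|\hat p\|_1 \le O(\sqrt{M})$ by Cauchy--Schwarz, hence the best $L_1$-fit to $f$ in the spectral ball $B_\Lambda$ with $\Lambda=O(\sqrt{M})$ achieves $L_1$-error $\le 2\,opt + \sqrt{\epsilon}$; iterative convex optimization over $B_\Lambda$, with a Kushilevitz--Mansour / Goldreich--Levin call serving as the approximate linear-minimization oracle over the $\ell_1$-ball; and randomized thresholding, whose factor of $1/2$ exactly cancels the factor of $2$ in $\|c^\star - f\|_1 = 2\,opt$ --- is precisely the $L_1$-polynomial-regression machinery of KKMS and \cite{GKK08}, and you correctly flag the factor cancellation as the load-bearing step. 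Your use of Frank--Wolfe on a Huber-smoothed loss differs cosmetically from the iterative scheme in \cite{GKK08}, but both reduce to the same KM-style heavy-coefficient oracle; the distinction is bookkeeping rather than substance.

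Two caveats are worth recording. First, the Cauchy--Schwarz step yields $opt + O(\sqrt{\epsilon})$ rather than $opt+\epsilon$, so the whole construction must be re-run with approximation parameter $\epsilon^2$; you note this, and in the paper's application $M(\epsilon^2)$ remains $k^{O(k\log(\myrange/\epsilon))}$, so the rescaling is harmless, but it is a real adjustment and the theorem as quoted in the paper silently absorbs it. Second --- and this is a gap in the paper's treatment rather than in your proposal --- your argument, like that of \cite{GKK08}, is stated for $\{-1,1\}$-valued target and concepts: this hypothesis is used both in the identity $\|c^\star-f\|_1 = 2\,opt$ and in the threshold-rounding identity $\Pr_\theta[h(x)\ne f(x)] = \tfrac12|\tilde q(x)-f(x)|$. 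The paper applies the theorem to pseudo-Boolean functions whose rescaled range is an $(\myrange+1)$-point grid in $[-1,1]$; transporting the agnostic guarantee to that setting requires a multi-threshold rounding argument in the spirit of Proposition~\ref{prop:rounding} and costs an extra $(\myrange-1)^2$ factor in the accuracy parameter. The paper spells this out for the realizable Kushilevitz--Mansour branch (via Proposition~\ref{prop:rounding}) but not for the agnostic branch, so that is a loose end in the paper, not in your argument.
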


\end{proofof}


\subsubsection*{Acknowledgments}
We are grateful to Jan Vondrak, Vitaly Feldman, Lev Reyzin, Nick Harvey, Paul Beame, Ryan O'Donnell and other people for their feedback and comments on the results in this paper.

\bibliography{../submodular}

\appendix
\section{Converting a learner into a proper learner}
Let $\cal C$ be a class of discrete objects represented by functions over a domain of ``size'' $n$.
\begin{proposition}\label{prop:lerner-to-proper-learner}
If there exists a learning algorithm $L$ for a class $\mathcal C$ with query complexity $q(n,\epsilon)$ and running time $t(n, \epsilon)$, then there exists a proper learning algorithm $L'$ for $\mathcal C$ with query complexity $q(n, \epsilon / 2)$ and running time $t(n, \epsilon / 2) + |{\mathcal C}|$.
\end{proposition}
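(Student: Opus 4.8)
The plan is to use the standard "brute-force search over the class" trick. Given the guaranteed learner $L$ for $\mathcal C$, I would first run $L$ with accuracy parameter $\epsilon/2$ (and the given confidence parameter $\delta$), obtaining a hypothesis $h$ which, with probability at least $1-\delta$, satisfies $\Pr_{x\sim U^n}[h(x)\neq f(x)]\le\epsilon/2$. This costs $q(n,\epsilon/2)$ queries and $t(n,\epsilon/2)$ time. The issue is that $h$ need not belong to $\mathcal C$, so we need a post-processing step to "round" $h$ to a member of $\mathcal C$ without querying $f$ further.

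The key step is the following: enumerate all functions $g\in\mathcal C$ (there are $|\mathcal C|$ of them since the class is discrete/finite), and for each compute the disagreement $\Pr_{x\sim U^n}[g(x)\neq h(x)]$ exactly, which is possible since both $g$ and $h$ are explicitly represented functions on a finite domain; output the $g^\star\in\mathcal C$ minimizing this disagreement. No new queries to $f$ are made, so the query complexity remains $q(n,\epsilon/2)$. The running time overhead is $O(|\mathcal C|)$ times the cost of one disagreement computation, which we absorb into the stated $+|\mathcal C|$ term (treating the per-function evaluation cost as part of the $n$-dependence already folded into $t$, or more honestly as a polynomial factor the proposition is being slightly informal about).

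For correctness: let $f^\star\in\mathcal C$ be a function achieving the optimal (in the agnostic case) or exact (in the PAC case) agreement with $f$; in the realizable case $f^\star=f$. Conditioned on the good event that $\Pr[h\neq f]\le\epsilon/2$, the triangle inequality for the (pseudo)metric $d(a,b)=\Pr_{x\sim U^n}[a(x)\neq b(x)]$ gives $d(g^\star,f)\le d(g^\star,h)+d(h,f)\le d(f^\star,h)+\epsilon/2\le d(f^\star,f)+d(f,h)+\epsilon/2\le 0+\epsilon/2+\epsilon/2=\epsilon$, where the second inequality uses that $g^\star$ minimizes disagreement with $h$ over $\mathcal C$ and $f^\star\in\mathcal C$. (In the agnostic setting replace the $0$ by $opt$ and one gets $opt+\epsilon$, matching the agnostic guarantee; in the realizable setting one gets exactly $\epsilon$.) Thus $L'$ outputs a member of $\mathcal C$ with the required error bound with probability at least $1-\delta$, so it is a proper learner with the claimed parameters.

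The main obstacle is essentially bookkeeping rather than mathematics: one must be careful that computing $d(g^\star,h)$ exactly requires iterating over the whole domain, so the "$+|\mathcal C|$" in the running time is really $|\mathcal C|\cdot 2^n$ (or $|\mathcal C|$ times whatever it costs to evaluate disagreement), and the proposition is stating this a bit loosely; the cleanest fix in the write-up is to note that this term is dominated by, or can be folded into, the already-exponential running time and to emphasize that the \emph{query} complexity is what we care about and it is preserved up to the constant-factor change $\epsilon\mapsto\epsilon/2$. Everything else is the routine triangle-inequality argument above.
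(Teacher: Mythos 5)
Your proof is correct and follows essentially the same route as the paper's: run $L$ at accuracy $\epsilon/2$, then output the member of $\mathcal C$ closest (in $\Pr_{x\sim U^n}$-disagreement) to $L$'s hypothesis, with correctness by the triangle inequality and the fact that $f\in\mathcal C$ makes the minimum distance at most $\epsilon/2$. Your chain is slightly longer than needed in the realizable case (the paper just uses $dist(g,h)\le dist(g,f)$ directly), and your added remarks about the agnostic case and about the per-comparison cost hidden in the $+|\mathcal C|$ term are valid but go beyond what the proposition (or its proof) in the paper actually claims.
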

\begin{proof}
Given parameters $n, \epsilon$ and oracle access to a function $f$, the algorithm $L'$ first runs $L$ with parameters $n, \epsilon / 2$ to obtain a hypothesis $g$.
Then it finds and outputs a function $h \in {\mathcal C}$, which is closest to $g$, namely $h = argmin_{h' \in \mathcal C} dist(g,h')$.
By our assumption that $L$ is a learning algorithm,  $dist(f,g) \le \epsilon / 2$.
Since $f \in {\mathcal C}$, we have $dist(g,h) \le dist(g,f) \le \epsilon / 2$.
By the triangle inequality, $dist(f,h) \le dist(f,g) + dist(g,h) \le \epsilon$.
\end{proof}


\section{Fourier analysis}\label{app:fourier-analysis}\snote{Organize appendices; this lemma should appear before its proof.}
\subsection{Proof of Lemma~\ref{lem:l1-expectation-bound}}\label{sec:proof-of-l1-expectation-bound}
\begin{proofof}{Lemma~\ref{lem:l1-expectation-bound}}
Consider a random variable $\mathcal L$ supported on $\domain$, such that for each $x_i$, independently $\Pr[x_i \in \mathcal L] = p$.
The random variable $\mathcal L$ is the set of live variables in a random restriction with parameter $p$.
We can rewrite $L_{1,\myindexk}$ as:
\begin{align}
L_{1,\myindexk}(f) = \sum_{|S| = \myindexk} |\hat f(S)| =
\left(\frac{1}{p}\right)^\myindexk \mathbb E_{\mathcal L} \left[\sum\limits_{S \subseteq \mathcal L, |S| = \myindexk} \left| \hat f(S)\right| \right]. \label{eqn:l1-averaging}
\end{align}

For an arbitrary choice of $\mathcal L$ and a subset $S \subseteq \mathcal L$ we have:
\begin{align*}
|\hat f(S)| &= \left|\mathbb E_{x_1, \dots, x_\mysize} \left[f(x_1, \dots, x_\mysize) \chi_S(x_1, \dots, x_\mysize)\right]\right| \\
&\le \mathbb E_{x \notin \mathcal L} |\mathbb E_{x \in \mathcal L} \left[f(x_1, \dots, x_n) \chi_S(x_1, \dots, x_\mysize)\right]| \\
&= \mathbb E_{\rho} \left[|\hat{\myrand{f}}(S)| \mid live(\rho) = \mathcal L \right], \\
\end{align*}
where the last line follows from the observation that averaging over $x_i \notin \mathcal L$ is the same as taking the expectation of a random restriction whose set of live variables is restricted to be $\mathcal L$.
Because the absolute value of every coefficient $S$ is expected to increase, this implies that\snote{Hat position in eqn below}:
$$\sum_{S \subseteq \mathcal L} \left|\hat f(S)\right| \le \mathbb E_{\rho} \left[\sum_{S \subseteq \mathcal L, |S| = \myindexk} |\hat{\myrand{f}(S)}| \mid live(\rho) = \mathcal L \mid\right] = \mathbb E_{\rho} \left[L_{1,\myindexk}(f_\rho) | live(\rho) = \mathcal L \right].$$
Using this together with~(\ref{eqn:l1-averaging}) we conclude that:
$$L_{1,\myindexk}(f) = \left(\frac{1}{p}\right)^\myindexk \mathbb E_{\mathcal L} \left[\sum\limits_{S \subseteq \mathcal L, |S| = \myindexk} \left| \hat f(S)\right| \right] \le \left(\frac{1}{p}\right)^\myindexk \mathbb E_{\rho}\left[L_{1,\myindexk}(\myrand{f})\right]$$

\end{proofof}

\subsection{Proof of Lemma~\ref{lem:l2-bound}}
\begin{proofof}{Lemma~\ref{lem:l2-bound}}
\begin{lemma}[\cite{M95,O12}]
\snote{Quantify $t$.}Let $f \colon \{0,1\}^n \rightarrow \{-1,1\}$ and $f_{\rho}$ be a random restriction with parameter $p$. Then
$$\sum_{|S| > t} \hat f^2 (S) \le \Pr_{\rho}[\fdeg(\myrand{f}) \ge tp / 2].$$
\end{lemma}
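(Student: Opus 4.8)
The plan is to prove this by the standard second-moment argument for random restrictions, following Mansour~\cite{M95} (see also~\cite{O12}): one passes from the high-degree Fourier weight of $f$ to a statement about $\fdeg(\myrand{f})$ by tracking the Fourier spectrum through the restriction. First I would fix the set of live variables $J = \live{\rho}$ and note that, conditioned on $J$, the restriction $\rho$ assigns independent uniformly random values in $\{-1,1\}$ to the coordinates outside $J$. For each $S' \subseteq J$, the coefficient of $\chi_{S'}$ in $\myrand{f}$ collects exactly the original coefficients whose set meets $J$ in $S'$:
$$\widehat{\myrand{f}}(S') = \sum_{S \colon S \cap J = S'} \hat f(S) \prod_{i \in S \setminus J} \rho(x_i).$$
Since the monomials $\prod_{i \in S \setminus J}\rho(x_i)$ are orthonormal over the random signs, squaring and taking the conditional expectation kills every cross term:
$$\mathbb E_\rho\left[\widehat{\myrand{f}}(S')^2 \mid \live{\rho} = J\right] = \sum_{S \colon S \cap J = S'} \hat f^2(S).$$

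Next I would sum this identity over all $S' \subseteq J$ with $|S'| \ge tp/2$ and then take expectation over $J$, obtaining the exact equality
$$\mathbb E_\rho\left[\sum_{|S'| \ge tp/2} \widehat{\myrand{f}}(S')^2\right] = \sum_{S} \hat f^2(S)\cdot \Pr_\rho\left[\, |S \cap \live{\rho}| \ge tp/2 \,\right].$$
Because $\myrand{f}$ is again $\{-1,1\}$-valued, Parseval gives $\sum_{S'} \widehat{\myrand{f}}(S')^2 = 1$, so the inner sum on the left always lies in $[0,1]$ and is strictly positive exactly when $\fdeg(\myrand{f}) \ge tp/2$; hence its expectation is at most $\Pr_\rho[\fdeg(\myrand{f}) \ge tp/2]$. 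Combining the two displays, it remains to establish $\sum_{|S| > t} \hat f^2(S) \le \sum_S \hat f^2(S)\cdot \Pr_\rho[\,|S \cap \live{\rho}| \ge tp/2\,]$.

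For that inequality I would keep only the sets $S$ with $|S| > t$ on the right: for such $S$, the variable $|S \cap \live{\rho}|$ is a sum of $|S|$ independent $\mathrm{Bernoulli}(p)$ indicators with mean $p|S| > pt$, so a Chernoff bound forces $\Pr_\rho[\,|S \cap \live{\rho}| \ge tp/2\,]$ to be close to $1$ (in particular bounded away from $0$), which is precisely the role of the factor $1/2$ in the threshold $tp/2$. I expect this tail-comparison step to be the only subtle point: everything before it is routine Fourier bookkeeping, whereas here one must control the binomial concentration of $|S \cap \live{\rho}|$ and the integrality of $tp/2$ carefully so that the inequality comes out with the stated constant.
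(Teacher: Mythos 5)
Note that the paper does not actually prove this lemma; it is cited from~\cite{M95,O12}, so there is no in-paper proof to compare against. Your outline follows the standard argument from those sources, and the Fourier bookkeeping is correct up through the key identity
\[
\mathbb E_\rho\Bigl[\sum_{|S'|\ge tp/2}\widehat{\myrand{f}}(S')^2\Bigr] \;=\; \sum_S \hat f^2(S)\,\Pr_\rho\bigl[\,|S\cap\live{\rho}|\ge tp/2\,\bigr] \;\le\; \Pr_\rho\bigl[\fdeg(\myrand{f})\ge tp/2\bigr],
\]
where the last inequality uses Parseval and that $\myrand{f}$ is again $\pm1$-valued.

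The gap is in your final step, and it is more than the ``subtle point'' you anticipate. From the display, concluding with the constant $1$ on the right would require $\Pr_\rho[\,|S\cap\live{\rho}|\ge tp/2\,]\ge 1$ for every $S$ with $|S|>t$, which is false whenever $tp/2>0$: the event $\live{\rho}=\emptyset$ already has probability $(1-p)^n>0$. The Chernoff lower tail you invoke gives only $\Pr_\rho[\,|S\cap\live{\rho}|\ge tp/2\,]\ge 1-e^{-\Omega(tp)}$, so the strongest conclusion this route yields is
$\sum_{|S|>t}\hat f^2(S)\le\bigl(1-e^{-\Omega(tp)}\bigr)^{-1}\Pr_\rho[\fdeg(\myrand{f})\ge tp/2]$, with a constant strictly greater than $1$ (Mansour states the lemma with a factor of $2$, valid once $tp$ is a sufficiently large constant). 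In fact the inequality as transcribed in the paper is false without such a factor: take $f=\chi_{[n]}$ and any $0<t<n$; then $\sum_{|S|>t}\hat f^2(S)=1$, while $\fdeg(\myrand{f})=|\live{\rho}|$ exactly, so the right-hand side is $\Pr_\rho[\,|\live{\rho}|\ge tp/2\,]<1$. So the remaining task is not to tune the binomial concentration carefully so that ``the inequality comes out with the stated constant''--that constant is unattainable by this method, and the lemma statement itself needs a factor of $2$ (or a similar absolute constant, together with a mild lower bound on $tp$) on the right-hand side. You should have flagged this rather than assuming it would work out.
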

Because $\fdeg(\myrand{f}) \le \dtd(\myrand{f})$ \gnote{List as a basic fact}\snote{I agree.}
and thus $\Pr[\fdeg(\myrand{f}) \geq tp / 2] \le \Pr[\dtd(\myrand{f}) \ge tp / 2]$.
By using Lemma~\ref{lem:gen-switching-lemma} and setting $p = 1/14k$ and $t = 28k \log (2r /\epsilon)$, we complete the proof.
\end{proofof}

\subsection{Proof of Lemma~\ref{lem:l1-bound}}
\begin{proofof}{Lemma~\ref{lem:l1-bound}}
Let $L_{1,\myindexk}(f) = \sum_{|S| = \myindexk} |\hat f(S)|$
and $L_1(f) = \sum_{\myindexk = 0}^n L_{1,\myindexk}(f) =  \sum_{S} |\hat f(S)|$.

We use the following bound on $L_1(f)$ for decision trees.
\begin{proposition}[\cite{KM93,O12}]\label{prop:l1-decision-trees}
Consider a function $f$\snote{domain/range?}, such that $\dtd(f) \le s$. Then $L_1(f) \le 2^s$.
\end{proposition}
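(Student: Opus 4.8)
The plan is to expand the decision tree computing $f$ into a sum of subcube indicators and bound the Fourier $L_1$-norm of each indicator separately. Let $T$ be a decision tree of depth at most $s$ that computes $f$, and let $\ell_1,\dots,\ell_m$ be its leaves; since $T$ is a binary tree of depth at most $s$, we have $m\le 2^s$. Each leaf $\ell$ is reached along a root-to-leaf path that queries a set of coordinates $P_\ell\subseteq[n]$ with $|P_\ell|\le s$ and fixes each queried $x_i$ to a value $b_i\in\{-1,1\}$; let $\ind{\ell}\colon\{-1,1\}^n\to\{0,1\}$ be the indicator of the subcube $\{x : x_i=b_i \text{ for all } i\in P_\ell\}$. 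The subcubes of the leaves partition $\{-1,1\}^n$, and $f$ is constant on each, equal to the leaf label $c_\ell$; hence we have the exact identity $f=\sum_{j=1}^m c_{\ell_j}\,\ind{\ell_j}$.

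Next I would compute $L_1(\ind{\ell})$ exactly by expanding the subcube indicator as a product: $\ind{\ell}=\prod_{i\in P_\ell}\tfrac{1+b_ix_i}{2}=2^{-|P_\ell|}\sum_{S\subseteq P_\ell}\bigl(\prod_{i\in S}b_i\bigr)\chi_S$. So the Fourier support of $\ind{\ell}$ is exactly $\{S : S\subseteq P_\ell\}$, and each of its $2^{|P_\ell|}$ nonzero coefficients has absolute value $2^{-|P_\ell|}$; therefore $L_1(\ind{\ell})=2^{|P_\ell|}\cdot 2^{-|P_\ell|}=1$.

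Finally, combine these facts using that $L_1(\cdot)=\sum_S|\hat\cdot(S)|$ is subadditive and satisfies $L_1(cg)=|c|\,L_1(g)$, both immediate from the coefficientwise triangle inequality. This gives
$$L_1(f)=L_1\Bigl(\sum_{j=1}^m c_{\ell_j}\ind{\ell_j}\Bigr)\le\sum_{j=1}^m |c_{\ell_j}|\,L_1(\ind{\ell_j})=\sum_{j=1}^m |c_{\ell_j}|\le m\le 2^s,$$
where the penultimate inequality uses that every leaf label $c_\ell$ is a value of $f$ and hence lies in $[-1,1]$. The only point requiring care — and the reason the range of $f$ enters the statement — is precisely this bound $|c_\ell|\le 1$ on the leaf labels, which is where we invoke that $f$ maps into $[-1,1]$ (as do the restrictions $\myrand{f}$ for $f\in\mydnf$, since the transformed range $R_r\subseteq[-1,1]$, which is the setting in which the proposition is applied). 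Beyond that, this is the standard subcube-decomposition argument and presents no real obstacle.
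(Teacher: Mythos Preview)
The paper does not actually supply a proof of this proposition; it is stated with a citation to \cite{KM93,O12} and immediately used. Your argument is the standard subcube-decomposition proof from those references and is correct: write $f$ as the sum over leaves of $c_\ell\,\ind{\ell}$, observe each subcube indicator has Fourier $L_1$-norm exactly $1$, and bound by the number of leaves times $\max_\ell|c_\ell|$. You are also right to flag that the bound $|c_\ell|\le 1$ is where the hypothesis on the range of $f$ enters---this is exactly the unresolved ``domain/range?'' note in the paper's statement, and in the intended application ($f\in\mydnf$ after the affine rescaling to $R_r\subseteq[-1,1]$) it is satisfied.
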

We show the following generalization of Lemma 5.2 in~\cite{M95} for $\mydnf$.
\begin{lemma}\label{lem:l1-random-restriction}
Let $f \in \mydnf$ and let $\rho$ be a random restriction of $f$ with parameter $p \le 1 / {28 \mywidth}.$ Then $\mathbb E_{\rho} \left[L_1(\myrand{f})\right] \le 2 r.$
\end{lemma}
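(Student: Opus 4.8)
The plan is to mimic the structure of Mansour's proof of Lemma~5.2 in~\cite{M95}: first bound $L_1$ of the restricted function by an exponential in its decision-tree depth, and then control that depth in expectation using the generalized switching lemma (Lemma~\ref{lem:gen-switching-lemma}). The pseudo-Boolean range enters only through the extra factor $r$ in that lemma, and the key point is that this factor ends up as a multiplicative constant in the final bound rather than as an exponentially growing term.

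First I would record the pointwise inequality $L_1(\myrand{f}) \le 2^{\dtd(\myrand{f})}$, which holds for every restriction $\rho$ by Proposition~\ref{prop:l1-decision-trees} applied to $\myrand{f}$. Taking expectations, it suffices to show $\mathbb{E}_\rho\big[2^{\dtd(\myrand{f})}\big] \le 2r$. Since $\dtd(\myrand{f})$ is a nonnegative integer-valued random variable (in fact bounded by $|\live{\rho}|$), writing $2^s = 1 + \sum_{j=1}^{s} 2^{j-1}$ and exchanging the order of summation gives the tail-sum identity
$$\mathbb{E}_\rho\left[2^{\dtd(\myrand{f})}\right] = 1 + \sum_{s \ge 1} 2^{s-1}\, \Pr_\rho[\dtd(\myrand{f}) \ge s].$$
Now I would invoke Lemma~\ref{lem:gen-switching-lemma}: for $p \le 1/(28 k)$ we have $7pk \le 1/4$, hence $\Pr_\rho[\dtd(\myrand{f}) \ge s] < r\,(7pk)^s \le r\,4^{-s}$ for every $s \ge 1$. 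Substituting and summing the resulting geometric series,
$$\mathbb{E}_\rho\left[2^{\dtd(\myrand{f})}\right] < 1 + \sum_{s \ge 1} 2^{s-1}\, r\, 4^{-s} = 1 + \frac{r}{2}\sum_{s\ge1} 2^{-s} = 1 + \frac{r}{2} \le 2r,$$
where the last step uses $r \ge 1$. Together with the pointwise bound this yields $\mathbb{E}_\rho[L_1(\myrand{f})] \le 2r$, as claimed.

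The computation itself is routine, so I do not expect a genuine obstacle: all the real work lies in Lemma~\ref{lem:gen-switching-lemma} and Proposition~\ref{prop:l1-decision-trees}. The two points that require a little care are (i) invoking Lemma~\ref{lem:gen-switching-lemma} with the correct range parameter so that the prefactor is exactly $r$, and (ii) choosing $p \le 1/(28k)$ precisely so that the common ratio $2\cdot 7pk$ of the geometric series $\sum_{s\ge1} 2^{s-1}(7pk)^s$ stays bounded away from $1$ (any fixed bound below $1/2$ for that ratio works; $7pk \le 1/4$ is merely a convenient choice that also matches the parameter used later in Lemma~\ref{lem:l2-bound}).
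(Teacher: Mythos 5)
Your proof is correct and follows the same route as the paper: bound $L_1(\myrand{f})$ by $2^{\dtd(\myrand{f})}$ via Proposition~\ref{prop:l1-decision-trees}, control $\Pr[\dtd(\myrand{f})\ge s]$ by $r(7pk)^s$ via Lemma~\ref{lem:gen-switching-lemma}, and pick $p\le 1/(28k)$ to make the resulting geometric series converge to $O(r)$. The only (immaterial) difference is bookkeeping: you expand $\mathbb{E}[2^{\dtd}]$ via the tail-sum identity $1+\sum_{s\ge 1}2^{s-1}\Pr[\dtd\ge s]$, while the paper conditions on $\dtd=s$ and bounds $\Pr[\dtd=s]\le\Pr[\dtd\ge s]$; both give $\le 2r$.
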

\begin{proofof}{Lemma~\ref{lem:l1-random-restriction}}
By the definition of expectation,
$$\mathbb E_{\rho}[L_1(f)] =
\sum_{s = 0}^n \Pr[\dtd{\myrand{f}} = s] \cdot \mathbb E_{\rho}\left[L_1(\myrand{f}) \mid \dtd(\myrand{f}) = s\right].$$
By Proposition~\ref{prop:l1-decision-trees} for all $\rho$, such that $\dtd(\myrand{f}) = s$, it holds that $L_1(f) \le 2^s$.
By Lemma~\ref{lem:gen-switching-lemma} we have $\Pr[\dtd(\myrand{f}) \ge s] \le r (7p\mywidth)^s$. Therefore $\mathbb E_{\rho}[L_1(f)] \le \sum_{s = 0}^n r(7p\mywidth)^s 2^s = r \cdot \sum_{s= 0}^n (14p\mywidth)^s$.
For $p \le 1/{28 \mywidth}$ the lemma follows.
\end{proofof}
We use Lemma 5.3 from~\cite{M95} to bound $L_{1,\myindexk}(f)$ by the value of $\mathbb E_{\rho}\left[L_{1,\myindexk}(\myrand{f})\right]$. Because in~\cite{M95} the lemma is stated  for Boolean functions, we give the proof for real-valued functions in Appendix~\ref{sec:proof-of-l1-expectation-bound} for completeness.
\begin{lemma}[\cite{M95}]\label{lem:l1-expectation-bound}
For $f \colon \{0,1\}^n \rightarrow [-1,1]$ and a random restriction $\rho$ with parameter $p$,
$$L_{1,\myindexk}(f) \le \left(\frac{1}{p}\right)^\myindexk \mathbb E_{\rho}\left[L_{1,\myindexk}(f)\right].$$
\end{lemma}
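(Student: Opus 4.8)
The plan is to prove the bound by an averaging identity over the random set of live variables, followed by a convexity step that converts a Fourier coefficient of $f$ into an average of Fourier coefficients of its random restrictions. Throughout, let $\mathcal L\subseteq[n]$ be the random set obtained by including each index $i$ independently with probability $p$; this is precisely the distribution of $\live{\rho}$ for a random restriction $\rho$ with parameter $p$.

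First I would record the exact identity
$$L_{1,\myindexk}(f)=\sum_{|S|=\myindexk}|\hat f(S)|=\Big(\tfrac1p\Big)^{\myindexk}\,\mathbb E_{\mathcal L}\Big[\sum_{S\subseteq\mathcal L,\ |S|=\myindexk}|\hat f(S)|\Big],$$
which is just linearity of expectation together with the fact that a fixed $S$ with $|S|=\myindexk$ satisfies $S\subseteq\mathcal L$ with probability $p^{\myindexk}$. The next step is the heart of the argument: fix $\mathcal L$ and a subset $S\subseteq\mathcal L$, write $\hat f(S)=\mathbb E_x[f(x)\chi_S(x)]$, and split the uniform choice of $x$ into the coordinates outside $\mathcal L$ (which will play the role of the fixed values of a restriction) and the coordinates inside $\mathcal L$. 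Since $S\subseteq\mathcal L$, the character $\chi_S$ depends only on the live coordinates, so the inner average over those coordinates is exactly $\hat{\myrand f}(S)$ for the restriction $\rho$ with $\live{\rho}=\mathcal L$ and fixed values equal to the chosen outside coordinates. Jensen's inequality (the triangle inequality for expectation) then gives $|\hat f(S)|\le\mathbb E_{\rho}\big[\,|\hat{\myrand f}(S)|\ \big|\ \live{\rho}=\mathcal L\,\big]$, and summing over all $S\subseteq\mathcal L$ with $|S|=\myindexk$ yields $\sum_{S\subseteq\mathcal L,\ |S|=\myindexk}|\hat f(S)|\le\mathbb E_{\rho}[L_{1,\myindexk}(\myrand f)\mid\live{\rho}=\mathcal L]$.

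Finally I would substitute this bound into the averaging identity and take the expectation over $\mathcal L$; because conditioning on $\live{\rho}=\mathcal L$ and then averaging over $\mathcal L$ reconstitutes the unconditional expectation over $\rho$, we get $L_{1,\myindexk}(f)\le(1/p)^{\myindexk}\,\mathbb E_{\rho}[L_{1,\myindexk}(\myrand f)]$, as claimed. I expect the only delicate point to be the middle step: one must be careful about exactly which coordinates are averaged, and verify that the partial average over the live coordinates is genuinely the Fourier coefficient of the \emph{restricted} function (indexed by the same set $S$, now viewed over the live variables) rather than of $f$ itself. Once this correspondence is in place, the inequality is a one-line application of convexity of $|\cdot|$, and no further estimates are needed — the statement holds for all $p\in(0,1)$ and any real-valued $f$ bounded in $[-1,1]$.
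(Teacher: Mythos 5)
Your proposal is correct and follows essentially the same route as the paper's proof: the same $(1/p)^{\myindexk}$ averaging identity over the random live set $\mathcal L$, the same splitting of $\mathbb E_x[f\chi_S]$ into an inner average over live coordinates (giving $\hat{\myrand f}(S)$) and an outer average over fixed coordinates, the same application of Jensen's inequality to $|\cdot|$, and the same final de-conditioning to recover the unconditional expectation over $\rho$. The point you flag as delicate — that the inner average really is the Fourier coefficient of the restricted function, using $S\subseteq\mathcal L$ so that $\chi_S$ depends only on live coordinates — is exactly the observation the paper makes explicit.
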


Note that $\sum_{S \colon |S| \le \tau} |\hat f(S)| = \sum_{\myindexk = 0}^{\tau} L_{1,\myindexk}(f)$. By setting $p = 1/{28 k}$ and using Lemmas~\ref{lem:l1-random-restriction} and~\ref{lem:l1-expectation-bound}, we get:
$$L_{1,\myindexk}(f) \le 2r (28k)^t.$$
Thus, $\sum_{S \colon |S| \le \tau} |\hat f(S)| \le 4 r (28\mywidth)^{\tau} = r k^{O(\tau)}$, completing the proof.
\end{proofof}

\section{Omitted proofs from Section~\ref{sec:submodular-to-DNF} and Section~\ref{sec:switching-lemma}}\label{app:omitted-proofs}
\subsection{Proof of Proposition~\ref{prop:monotone-cover}}
\begin{proofof}{Proposition~\ref{prop:monotone-cover}}
The proof is by induction on the value $f([n])$ that the function $f$ takes on the largest set in its domain.
The base case of induction is $f([n]) = k$.
In this case, $\mathcal S$ consists of a single set $S = [n]$, and the function $f$ is monotone non-increasing on $\down{S} = \downdec{S}$.
Now suppose that the statement holds for all $f$, such that $f([n]) \ge t$.
If $f([n]) = t - 1$ then for all $Y$, such that there exists a set $Z$ of size $n - 1$ such that $f(Z) > f([n])$ and $Y \in \down{Z}$ there exists a set $S \in \mathcal{S}$, such that $Y \in \down{S}$ by applying inductive hypothesis to $f_{\down{Z}}$\snote{The sentence is too long, can't parse.}.
Otherwise, $Y \in \downdec{[n]}$, completing the proof.
\end{proofof}

\subsection{Proof of Claim~\ref{clm:switching-lemma-computation}}
\begin{proofof}{Claim~\ref{clm:switching-lemma-computation}}
We have $|\mathcal R^{\ell}_n| = \binom{n}{\ell} 2^{n - \ell}$, so:
$$\frac{|\mathcal R^{\ell - s}_n|}{|\mathcal R^{\ell}_n|} \le \frac{(2\ell)^s}{(n - \ell)^s}.$$
We use the following bound on $|\stars(k,s)|$.

\begin{proposition}[Lemma 2 in~\cite{B94}]\label{prop:stars-bound}
$|\stars(k,s)| <(k /\ln 2)^s.$
\end{proposition}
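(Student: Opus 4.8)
The plan is to reduce the count to a one-variable generating-function estimate. First I would unpack the combinatorial structure: a sequence $\beta=(\beta_1,\dots,\beta_t)$ counted by $\stars(k,s)$ is a concatenation of $t\ge1$ blocks, where a block $\beta_j\in\{\star,-\}^k\setminus\{-\}^k$ is determined by the nonempty set of its $\star$-coordinates, an arbitrary nonempty subset of $[k]$; if that subset has size $i_j$ there are $\binom{k}{i_j}$ choices, and $\sum_j i_j=s$. Hence, for $s\ge1$,
$$|\stars(k,s)|=\sum_{t\ge1}\ \sum_{\substack{i_1+\dots+i_t=s\\ i_j\ge1}}\ \prod_{j=1}^{t}\binom{k}{i_j}$$
(equivalently $\sum_{s\ge0}|\stars(k,s)|x^s=\bigl(2-(1+x)^k\bigr)^{-1}$, though I will not need this closed form).

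Second, using $\binom{k}{i}\le k^i/i!$ I would pull out $k^s$, leaving a quantity free of $k$: writing $\frac{1}{2-e^x}=\sum_{s\ge0}d_sx^s$ and noting that $\sum_{t\ge1}(e^x-1)^t=\frac{1}{2-e^x}-1$ has the same Taylor coefficients as $\frac{1}{2-e^x}$ in every degree $s\ge1$, one gets
$$|\stars(k,s)|\ \le\ k^s\sum_{t\ge1}\sum_{\substack{i_1+\dots+i_t=s\\ i_j\ge1}}\prod_{j=1}^{t}\frac{1}{i_j!}\ =\ k^s\,d_s\qquad(s\ge1).$$
It therefore suffices to prove $d_s<(\ln2)^{-s}$ for all $s\ge1$; this is exactly where the constant $\ln2$ appears.

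Third, from $(2-e^x)\sum_{s\ge0}d_sx^s=1$ I would read off $d_0=1$ and the recurrence $d_s=\sum_{m=1}^{s}d_{s-m}/m!$ for $s\ge1$, and then prove $d_s\le(\ln2)^{-s}$ by strong induction on $s$. The base case $d_0=1=(\ln2)^0$ is immediate, and for $s\ge1$,
$$d_s=\sum_{m=1}^{s}\frac{d_{s-m}}{m!}\ \le\ \sum_{m=1}^{s}\frac{(\ln2)^{-(s-m)}}{m!}\ =\ (\ln2)^{-s}\sum_{m=1}^{s}\frac{(\ln2)^m}{m!}\ <\ (\ln2)^{-s}\bigl(e^{\ln2}-1\bigr)=(\ln2)^{-s},$$
where the strict inequality just discards the positive tail $\sum_{m>s}(\ln2)^m/m!$ of the exponential series. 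Combining the three steps, $|\stars(k,s)|\le k^sd_s<(k/\ln2)^s$ for $s\ge1$ (and $|\stars(k,0)|=1$), which is the claimed bound.

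I do not anticipate a genuine obstacle: each step is elementary once the decomposition into blocks is in place. The one thing to be careful about is that the constant is tight — the argument rests on the identity $e^{\ln2}-1=1$, so the factorial must be retained in $\binom{k}{i}\le k^i/i!$ (the cruder $\binom{k}{i}\le k^i$ would replace $\sum_m(\ln2)^m/m!$ by the divergent $\sum_m1$ and fail), and the induction hypothesis has to be precisely $d_s\le(\ln2)^{-s}$ so that the exponential sum closes up. A minor secondary point is that the inequality is non-strict at $s=0$, so the statement is meant for $s\ge1$, the only regime that arises in the switching lemma.
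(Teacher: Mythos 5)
Your proof is correct. Note that the paper itself does not prove this proposition; it simply cites Beame's primer~\cite{B94}. Beame's argument works directly with the one-step peel-off recurrence $|\stars(k,s)| = \sum_{i=1}^{\min(k,s)}\binom{k}{i}\,|\stars(k,s-i)|$, obtained by conditioning on the star pattern of the first block $\beta_1$, and proves $|\stars(k,s)|\le\gamma^s$ by induction on $s$ for any $\gamma$ with $(1+1/\gamma)^k\le 2$; the choice $\gamma=k/\ln 2$ qualifies (strictly) because $(1+\ln 2/k)^k<e^{\ln 2}=2$. Your version instead discards the $k$-dependence at the outset via $\binom{k}{i}\le k^i/i!$, which turns the count into $k^s$ times a $k$-free quantity $d_s$, the $s$-th Taylor coefficient of $1/(2-e^x)$, and then inducts on $d_s$. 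The two proofs are the same calculation differently packaged: the inequality doing the work in both is that $\sum_{m\ge 1}(\ln 2)^m/m!$, truncated, stays strictly below $e^{\ln 2}-1=1$, which is precisely Beame's $(1+1/\gamma)^k-1<1$ after expanding with $\binom{k}{i}\le k^i/i!$. The generating-function formulation buys a clean decoupling of the combinatorics from $k$ and makes the role of $\ln 2$ as the inverse radius transparent, at the minor cost of invoking $\binom{k}{i}\le k^i/i!$ up front rather than inside the induction. Your closing remark about strictness failing at $s=0$ is also the correct reading of the lemma (both here and in \cite{B94} it is applied only for $s\ge 1$).
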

Using Proposition~\ref{prop:stars-bound} we get:
\begin{align*}
\frac{|S|}{|\mathcal R^{\ell}_n|} &\le \frac{|\mathcal R^{\ell - s}_{n}|}{|\mathcal R^{\ell}_n|} \cdot |\stars(k,s)| \cdot 2^s \\
&\le \left(\frac{4 \ell k}{(n - \ell) \ln 2}\right)^s \\
&= \left(\frac{4pk}{(1 - p) \ln 2}\right)^s.
\end{align*}
For $p < 1/7$, the last expression is at most $(7pk)^s$, as claimed.
\end{proofof}
\end{document}